\documentclass[runningheads]{llncs}

\usepackage[utf8]{inputenc}
\usepackage[T1]{fontenc}
\usepackage{url}
\usepackage{booktabs}
\usepackage{amsfonts}
\usepackage{nicefrac}
\usepackage{microtype}
\usepackage[table]{xcolor}
\definecolor{definitionorange}{RGB}{220,128,54}
\definecolor{definitionorangelight}{RGB}{255,245,235}
\usepackage{graphicx}

\usepackage[numbers,sort&compress]{natbib}

\usepackage[T1]{fontenc}
\usepackage[scale=0.9]{sourcecodepro}

\usepackage{amssymb, amsmath}
\usepackage{xspace}
\usepackage{bm}
\usepackage[caption=false,font=footnotesize]{subfig}

\usepackage{array}
\usepackage{makecell}
\newcolumntype{L}[1]{>{\raggedright\arraybackslash}p{#1}}
\newcolumntype{C}[1]{>{\centering\arraybackslash}p{#1}}

\usepackage{graphicx}
\usepackage{booktabs}
\usepackage[ruled,vlined]{algorithm2e}

\usepackage{paralist}
\usepackage{adjustbox}
\usepackage{float}

\usepackage{proof}

\usepackage{xparse}

\usepackage[most]{tcolorbox}
\tcbuselibrary{theorems}

\newtcbtheorem[number within=section]{boxeddefinition}{Definition}{
  enhanced,
  breakable,
  colback=white,
  colframe=black!60,
  boxrule=0.6pt,
  arc=1.2mm,
  left=6pt,right=6pt,top=6pt,bottom=6pt,
  fonttitle=\bfseries,
  coltitle=black,
  attach title to upper,
  boxed title style={
    colback=black!3,
    colframe=black!60,
    boxrule=0.6pt,
    arc=1.2mm
  }
}{def}

\usepackage[inline]{enumitem}

\usepackage{stmaryrd}

\usepackage{tikz}
\usetikzlibrary{shapes.geometric}
\usetikzlibrary{arrows.meta,arrows,positioning,calc}
\usetikzlibrary{automata}

\usepackage{hyperref}

\newcommand{\Ag}{\mathsf{Ag}}

\newcommand{\Ap}{\mathsf{Ap}}

\newcommand{\true}{\ensuremath{\top}}
\newcommand{\false}{\ensuremath{\bot}}

\newcommand{\LTL}{\ensuremath{\mathsf{LTL}}}
\newcommand{\SafeLTL}{\ensuremath{\LTL_{\mathsf{safe}}}}

\newcommand{\Next}{\ensuremath{\mathsf{X}}}

\newcommand{\WeakUntil}{\ensuremath{\mathsf{W}}}
\newcommand{\Release}{\ensuremath{\mathsf{R}}}

\newcommand{\Always}{\ensuremath{\mathsf{G}}}

\newcommand{\defeq}{\mathrel{\overset{\mathrm{def}}{=}}}

\DeclareMathOperator{\Supp}{Supp}
\newcommand{\WinningRegion}{\ensuremath{\mathsf{Win}^{\Box}}}

\renewcommand{\iff}{\ensuremath{\leftrightarrow}}

\newcommand{\IPath}{\ensuremath{\mathsf{IPath}\xspace}}
\newcommand{\Trace}{\ensuremath{\mathcal{T}}}

\definecolor{shieldgrey}{RGB}{92,98,105}
\definecolor{shieldgreylight}{RGB}{247,248,249}

\newcommand{\shieldbadge}{%
  \tikz[baseline=-0.50ex, x=0.58ex, y=0.58ex]{%
    \path[fill=shieldgrey]
      (0,1.18) -- (0.92,0.78) -- (0.72,-0.10)
      .. controls (0.56,-0.74) and (0.18,-1.10) .. (0,-1.28)
      .. controls (-0.18,-1.10) and (-0.56,-0.74) .. (-0.72,-0.10)
      -- (-0.92,0.78) -- cycle;
  }%
}
\newcommand{\ShieldIcon}{\mathord{\raisebox{-0.08ex}{\mbox{\shieldbadge}}}}

\newcommand{\ContractShieldMask}[1]{\mathcal{M}_{\ShieldIcon}^{\mathcal{C},#1}}

\renewcommand{\iff}{\ensuremath{\leftrightarrow}}

\newcommand\env[1]{\textsc{#1}}
\newcommand\algo[1]{\texttt{#1}}

\newcommand{\AG}[3]{\ensuremath{\langle #1 \rangle #2 \langle #3 \rangle}\xspace}

\usepackage{thmtools,thm-restate}
\newlength{\logicsemanticsrhswidth}
\newlength{\logicsemanticslhswidth}
\newlength{\logicsemanticstextinset}
\newcommand{\logicsemanticssetrhswidth}[1]{%
    \setlength{\logicsemanticsrhswidth}{#1}%
}

\newcolumntype{Y}[1]{>{\raggedright\arraybackslash}p{#1}}

\newcommand{\logicsemanticssetboxstyle}[1]{%
    \tcbset{logicsemantics/.style={#1}}%
}

\logicsemanticssetboxstyle{%
    enhanced,
    breakable,
    colframe=black!75!white,
    colback=black!4!white,
    arc=0mm,
    boxrule=0.95pt,
    left=0mm,
    right=0mm,
    top=0.8mm,
    bottom=0.8mm,
    boxsep=0pt
}

\newcommand{\logicsemanticsintrofont}{%
    \footnotesize\sffamily\bfseries
}

\newcommand{\logicsemanticssectionfont}{%
    \footnotesize\sffamily\bfseries
}

\newcommand{\logicsemanticsbodyfont}{%
    \scriptsize
}

\newcommand{\logicsemanticsfullrow}[1]{%
    \multicolumn{3}{@{}Y{\dimexpr\linewidth-2\logicsemanticstextinset\relax}@{}}{%
        \hspace*{\logicsemanticstextinset}\rule{0pt}{2.45ex}#1%
    }\\[-0.15ex]%
}

\providecommand{\logicsemanticsintro}{%
    For any valuation $\nu$, history $h$, path $\pi$, position $i \geq 1$, and formulae of the logic under study:%
}

\newenvironment{logicsemantics}[1][\logicsemanticsintro]{%
    \par\medskip
    \noindent
    \begin{tcolorbox}[logicsemantics]
    \logicsemanticsbodyfont
    \setlength{\tabcolsep}{0pt}%
    \renewcommand{\arraystretch}{0.96}%
    \begin{tabular*}{\linewidth}{@{}Y{\logicsemanticslhswidth} c Y{\logicsemanticsrhswidth}@{}}%
    \logicsemanticsfullrow{\logicsemanticsintrofont #1}%
    \noalign{\vskip 0.55ex}%
}{%
    \end{tabular*}%
    \end{tcolorbox}%
    \par\medskip
}

\newcommand{\logicsemanticsdivider}{%
    \noalign{\hrule height 0.4pt}%
}

\newcommand{\logicsemanticssection}[1]{%
    \logicsemanticsdivider
    \logicsemanticsfullrow{\logicsemanticssectionfont #1}%
    \logicsemanticsdivider
    \noalign{\vskip 0.7ex}%
}

\newcommand{\logicsemanticsrule}[3]{%
    #1 & #2 & #3\\%
}

\newcommand{\logicsemanticsruleblock}[3]{%
    #1 & #2 & \begin{tabular}[t]{@{}l@{}}#3\end{tabular}\\%
}

\makeatletter
\DeclareRobustCommand{\RestatedTitleSuffix}{%
  \ifthmt@thisistheone\else\space(restated)\fi
}
\makeatother

\begin{document}

\title{Contract-Based Compositional Shielding for Safe Multi-Agent Reinforcement Learning}
\titlerunning{Contract-Based Compositional Shielding}
\author{Omar Adalat\inst{1} \and
Edwin Hamel-De le Court\inst{1,2}\and
 Francesco Belardinelli \inst{1}}
\authorrunning{O. Adalat et al.}
\institute{Imperial College London, London SW7 2AZ, UK \and
University of Manchester, Manchester M13 9PL, UK
\email{\{o.adalat24,e.hamel.de-le-court,francesco.belardinelli\}@imperial.ac.uk} 
\\}
\maketitle

\begingroup
\renewcommand{\thefootnote}{}
\footnotetext{Paper accepted to the 23rd European Conference on Multi-Agent Systems.}
\endgroup

\begin{abstract}
\emph{Safe coordination} problems surface in multi-agent reinforcement learning when global safety cannot be enforced by any agent unilaterally: the admissibility of one agent's action may depend on the dynamics of other agents. Decentralised shields can enforce safety at runtime, but purely factorised permissions often exclude optimal team behaviour that is safe only through coordination. We study deterministic safety guarantees for agents trained and deployed under decentralised execution, recovering team-optimal safe behaviour without centralised runtime control. Agents have a shared global specification $\phi$ in the safety fragment of Linear Temporal Logic ($\SafeLTL$), and select among tuples of local $\SafeLTL$ obligations whose conjunction implies the global specification $\phi$. Each agent may rely on the other agents' local obligations as assumptions because the whole contract tuple is certified simultaneously and allows projection into local action masks. At learning time, a non-stationary multi-armed bandit chooses among a library of local $\SafeLTL$ obligations to select the tuple that optimises team reward, all without forgoing end-to-end safety. We evaluate the approach across 6 environments and 15 algorithmic variants.
\end{abstract}

\keywords{Safe Multi-Agent Reinforcement Learning  \and Decentralised Learning \and Compositional Verification \and Shielding}

\section{Introduction} \label{sec:intro}
Assuring the safety of learning cooperative agents requires reconciling two demands: agents should optimise a shared task objective, whilst satisfying safety constraints during training and deployment. In \emph{safe coordination} problems~\cite{elsayed2021safe,raja2009towards}, the admissibility of one agent's action depends on the non-stationary policies of the other agents in the shared environment, so reasoning that treats teammate choices as arbitrary must discard behaviour that is safe only under coordinated actions. \emph{Multi-Agent Reinforcement Learning} (MARL) offers a powerful framework for sequential decision-making under uncertainty in a shared environment, by leveraging sampling-based methods to iteratively refine policies in stochastic games~\cite{littman1994markov}. Stochastic games are commonly studied under cooperative, competitive, and mixed strategic dynamics. Many multi-agent systems naturally involve cooperative tasks, such as rescue drones~\cite{drew2021multi} and autonomous warehouses~\cite{wurman2008coordinating}, which makes the cooperative setting a natural target for safe MARL. However, standard reward penalties are insufficient to verify that behaviour is safe once a policy is deployed~\cite{ji2023safety}. From the formal methods standpoint, \emph{shielding}~\cite{konighofer2017shield} is a popular technique for enforcing safety during both training and deployment by pre-emptively masking out unsafe actions that may lead to a specification violation or post-posedly replacing unsafe actions~\cite{alshiekh2018safe}.

\begin{figure}[t]
  \centering
  \begingroup
\centering
\begin{center}
    \setlength{\fboxsep}{3pt}
    \fcolorbox{definitionorange}{definitionorangelight}{%
    \begin{minipage}{0.9\textwidth}
    \centering
    \scriptsize
    \checkmark\ centrally safe
    \qquad $\times$ centrally unsafe
    \qquad \textcolor{definitionorange}{$\star$} locally rejected optimum

    \vspace{0.25em}
    \tikz[baseline=-0.45ex]\fill[green!18, draw=black!45, line width=0.25pt] (0,0) rectangle (0.16,0.16);
    enabled safe action
    \qquad
    \tikz[baseline=-0.45ex]\fill[red!16, draw=black!45, line width=0.25pt] (0,0) rectangle (0.16,0.16);
    unsafe
    \qquad
    \tikz[baseline=-0.45ex]\fill[black!10, draw=black!45, line width=0.25pt] (0,0) rectangle (0.16,0.16);
    safe but not enabled here
    \end{minipage}}
\end{center}

\vspace{-0.35em}
\begin{adjustbox}{max width=\linewidth}
\begin{tabular}{@{}c@{\hspace{1.6em}}c@{\hspace{1.6em}}c@{}}
\textbf{(a) Central shield} &
\textbf{(b) Unilateral masks} &
\textbf{(c) Refined obligation} \\
\begin{tikzpicture}[
    x=0.54cm,
    y=0.54cm,
    cell/.style={draw=black!55, line width=0.3pt},
    label/.style={font=\scriptsize}
]
    \fill[green!18] (0,0) rectangle (1,1);
    \fill[green!18] (1,0) rectangle (2,1);
    \fill[green!18] (0,1) rectangle (1,2);
    \fill[red!16] (1,1) rectangle (2,2);
    \draw[cell] (0,0) grid (2,2);
    \node[label] at (0.5,-0.26) {$0$};
    \node[label] at (1.5,-0.26) {$1$};
    \node[label] at (1,-0.82) {$p_2$};
    \node[label, anchor=east] at (-0.12,0.5) {$0$};
    \node[label, anchor=east] at (-0.12,1.5) {$1$};
    \node[label, rotate=90] at (-0.82,1) {$p_1$};
    \node[font=\large, text=definitionorange] at (0.5,1.5) {$\star$};
    \node[label] at (0.5,0.5) {\checkmark};
    \node[label] at (1.5,0.5) {\checkmark};
    \node[label] at (1.5,1.5) {$\times$};
\end{tikzpicture}
 &
\begin{tikzpicture}[
    x=0.54cm,
    y=0.54cm,
    cell/.style={draw=black!55, line width=0.3pt},
    label/.style={font=\scriptsize}
]
    \fill[green!18] (0,0) rectangle (1,1);
    \fill[black!10] (1,0) rectangle (2,1);
    \fill[black!10] (0,1) rectangle (1,2);
    \fill[red!16] (1,1) rectangle (2,2);
    \draw[cell] (0,0) grid (2,2);
    \draw[very thick, dashed, black!70] (0,0) rectangle (1,1);
    \node[label] at (0.5,-0.26) {$0$};
    \node[label] at (1.5,-0.26) {$1$};
    \node[label] at (1,-0.82) {$p_2$};
    \node[label, anchor=east] at (-0.12,0.5) {$0$};
    \node[label, anchor=east] at (-0.12,1.5) {$1$};
    \node[label, rotate=90] at (-0.82,1) {$p_1$};
    \node[font=\large, text=black!35] at (0.5,1.5) {$\star$};
    \node[label] at (0.5,0.5) {\checkmark};
    \node[label] at (1.5,0.5) {\checkmark};
    \node[label] at (1.5,1.5) {$\times$};
\end{tikzpicture}
 &
\begin{tikzpicture}[
    x=0.54cm,
    y=0.54cm,
    cell/.style={draw=black!55, line width=0.3pt},
    label/.style={font=\scriptsize}
]
    \fill[black!10] (0,0) rectangle (1,1);
    \fill[black!10] (1,0) rectangle (2,1);
    \fill[green!18] (0,1) rectangle (1,2);
    \fill[red!16] (1,1) rectangle (2,2);
    \draw[cell] (0,0) grid (2,2);
    \draw[very thick, definitionorange] (0,1) rectangle (1,2);
    \node[label] at (0.5,-0.26) {$0$};
    \node[label] at (1.5,-0.26) {$1$};
    \node[label] at (1,-0.82) {$p_2$};
    \node[label, anchor=east] at (-0.12,0.5) {$0$};
    \node[label, anchor=east] at (-0.12,1.5) {$1$};
    \node[label, rotate=90] at (-0.82,1) {$p_1$};
    \node[font=\large, text=definitionorange] at (0.5,1.5) {$\star$};
    \node[label] at (0.5,0.5) {\checkmark};
    \node[label] at (1.5,0.5) {\checkmark};
    \node[label] at (1.5,1.5) {$\times$};
\end{tikzpicture}
 \\[-0.1em]
\begin{minipage}[t]{0.26\linewidth}
\centering\scriptsize
$\mathsf{Safe}(x)\defeq\{(0,0),(1,0),(0,1)\}$.
\end{minipage}
&
\begin{minipage}[t]{0.26\linewidth}
\centering\scriptsize
$\mathsf{Safe}_1(x)\times \mathsf{Safe}_2(x)=\{(0,0)\}$.
\end{minipage}
&
\begin{minipage}[t]{0.26\linewidth}
\centering\scriptsize
$\varphi_2:\mathsf{Always}(p_2=0)$ permits $1\in S_1^C(x)$.
\end{minipage}
\end{tabular}
\end{adjustbox}
\endgroup

  \caption{A $2 \times 2$ safe-coordination instance in which unilateral local masks lose the safe optimal action, while a refined obligation recovers it through teammate commitment.}
  \label{fig:cartesian_safety_loses_coordination}
\end{figure}

\begin{example}\label{ex:cartesian_safety_loses_coordination}
Figure~\ref{fig:cartesian_safety_loses_coordination} gives a two-agent safe-coordination instance. At state $x$, each agent chooses an action $p_i\in\{0,1\}$, and safety excludes only the simultaneous choice of~$1$:
$
\mathsf{Safe}(x)\defeq\{(0,0),(1,0),(0,1)\}.
$
Suppose the team reward favours $(1,0)$. A central shield may admit the whole safe relation and therefore preserve this coordinated optimum. A purely unilateral decomposition, which permits an action only when it is safe against every teammate action, keeps only $\{0\}\times\{0\}$: $p_1=1$ is safe only when $p_2=0$, and $p_2=1$ is safe only when $p_1=0$. Thus unilateral (factorised) masking loses the optimal safe action. A certified local obligation, for example $\mathsf{Always}(p_2=0)$, records the missing commitment: under that contract, agent~1 can again be permitted to choose $p_1=1$, recovering $(1,0)$ without admitting the unsafe pair $(1,1)$.
\end{example}

This paper studies compositional decentralised shielding which allows the preservation of optimal safe coordination actions for cooperative MARL. We construct local shields from certified local temporal contracts, while preserving a single shared global safety objective. We ask whether decentralised shields can recover optimal team-return safe coordination without falling back to a conservative Cartesian decomposition of the central shield. The technical challenge is to let agents rely on one another's obligations without introducing an unsound proof cycle. We answer it by certifying whole contract tuples through a circular fixed-point construction and projecting the jointly certified action relation into decentralised runtime masks. Learning then selects among finite tuples of local temporal obligations whose conjunction entails the single globally shared temporal objective, exposing high-return coordinated behaviours whilst upholding deterministic safety end-to-end.

\paragraph*{Contributions.} 
Our contributions are as follows:
\mbox{}\\
\begin{enumerate*}[label=(\roman*)]
  \item We show how a shared global safety specification in $\SafeLTL$ can be decomposed through candidate local obligations and compiled into deterministic safety automata for shield synthesis.\newline
  \item We provide formal soundness guarantees for decentralised shielding based on circular assume-guarantee reasoning, local $\SafeLTL$ contracts, and contract-product fixed points. Moreover, we also prove recoverability of the optimal safe deterministic policy under explicit library-representability conditions. \newline 
  \item We show how bounded certified profile search and discounted upper-confidence contract selection can reduce conservatism while preserving certification.\newline 
  \item Furthermore, we evaluate reward and safety performance across 6 cooperative multi-agent environments using an extensive set of 15 algorithmic variants.
\end{enumerate*}

\section{Related Work} \label{sec:related_work}
\paragraph*{Multi-Agent Shielding.} 
First covered in~\cite{elsayed2021safe}, multi-agent shielding has since been studied in both centralised and factorised (decentralised) forms. There is also the notion of \emph{dynamic shielding}~\cite{xiao2023model}, which allows shields to split, merge, and recompute based on the current agent configuration, for example when interaction is local in space. In particular, Alshiekh et al.~\cite{alshiekh2018safe} compile safety specifications into automata and synthesise shields that are correct by construction and \emph{minimally interfering}, in the sense that they forbid an action only when allowing it could lead to a violation. We adapt the same automata-theoretic safety-game perspective as~\cite{alshiekh2018safe,kupferman2001model} for decentralised contracts.

\paragraph*{Compositional Shielding.} 
Melcer et al.~\cite{melcer2022shield} address shield decentralisation by compiling a centralised shield into per-agent local shields, which is a \emph{shield decomposition} method or Cartesian factorisation. Their construction starts from a known-safe joint action for each shield-state/observation pair and incrementally enlarges each agent's locally permitted action set only when every induced joint action remains safe and the successor shield state remains locally identifiable. This preserves safety guarantees in a communication-free setting. The price of this decentralisation is that the safe joint-action relation must admit a Cartesian-style decomposition into independent local permissions. Jointly safe but non-factorisable coordinated actions are excluded, making the decentralised shield strictly more conservative than the original centralised shield in general.

Brorholt et al.~\cite{brorholt2024compositional} show that naively projecting a centralised shield can lose coordination information, and recover sound decentralised shields through \emph{ordered assume-guarantee} reasoning. Their method requires an acyclic dependency structure: an agent may rely only on guarantees earlier in the order, and the associated cascading-learning procedure exploits this order to obtain in-distribution training and Pareto-style guarantees under local-optimality assumptions. This is the closest precedent to our setting. Our contribution differs in both the proof principle and the learning style. Rather than fixing an acyclic dependency order, we certify an entire tuple of local obligations simultaneously by a contract-product fixed point. This permits circular coordination assumptions and therefore does not require cascading training or an acyclic training order. Moreover, algorithmic synthesis of local assume-guarantee properties is left unaddressed; we keep a single shared global $\SafeLTL$ objective and search over finite tuples of local $\SafeLTL$ obligations whose conjunction entails it. Learning then selects among pre-certified contracts that can expose high-return safe behaviours while preserving deterministic safety.

\section{Preliminaries} \label{sec:preliminaries}
\paragraph*{Multi-Agent Reinforcement Learning.} \label{sec:sg}
We model multi-agent RL problems as an $n$-agent \emph{concurrent stochastic game} (a.k.a.~a {\em Markov game}~\cite{littman1994markov})
$\mathcal{M} \defeq (\mathcal{S}, s_0, \mathcal{A}^1, \dots, \mathcal{A}^n, d^1,\dots,d^n, \mathcal{P}, \mathcal{R}^1, \dots, \mathcal{R}^n, \gamma)$, where $\mathcal{S}$ is the finite {\em state space} and $s_0 \in \mathcal{S}$ is the {\em initial state}; $\Ag\defeq\{1,\dots,n\}$ is the set of {\em agents}; each $\mathcal{A}^i$ is the finite {\em action space} of agent $i \in \Ag$ and $\mathcal{A}\defeq\prod_{i\in\Ag}\mathcal{A}^i$ is the {\em joint action space};
each $d^i:\mathcal{S}\to 2^{\mathcal{A}^i}\setminus\{\emptyset\}$ is the {\em action-availability protocol} of agent~$i$, with {\em legal  joint-action} set $d(s)\defeq\prod_{i\in\Ag}d^i(s)$ at state $s$, so $a$ is {\em legal} at $s$ iff $a\in d(s)$.
If the environment has no state-dependent feasibility restrictions, then $d^i(s)=\mathcal{A}^i$ for all $i$ and $s$;
$\mathsf{Dom}(d)\defeq\{(s,a)\in\mathcal{S}\times\mathcal{A}\mid a\in d(s)\}$ is the {\em legal state-action domain}; $\mathcal{P}: \mathsf{Dom}(d)\to \Delta(\mathcal{S})$ is the {\em transition function}, with $\Delta(\mathcal{S})$ being the set of {\em probability distributions} over $\mathcal{S}$; each  $\mathcal{R}^i: \mathsf{Dom}(d)\times \mathcal{S}\to \mathbb{R}$ is the (Markovian) {\em reward function} of agent $i$; and $\gamma$ is the {\em discount factor}. The action protocol captures state-dependent primitive-action feasibility before shielding. 

A (stochastic) \emph{policy} $\pi^i$ (also known as a {\em strategy}) of agent $i$ is a function $\pi^i : \mathcal{S} \to \Delta(\mathcal{A}^i)$ such that $\pi^i(a^i\mid s)=0$ whenever $a^i\notin d^i(s)$. A {\em joint policy} $\pi=(\pi^1,\dots,\pi^n)$ induces the decentralised product distribution
$
\pi(a\mid s)\defeq\prod_{i=1}^{n}\pi^i(a^i\mid s),
$
where
$
a=(a^1,\dots,a^n).
$
The joint policy of other agents is $\pi^{-i}(a^{-i}|s) \defeq \prod_{j \neq i}\pi^j(a^j|s)$, where $a^{-i}$ is the joint action except agent $i$'s action.   In the \emph{cooperative} strategic setting, the agents are evaluated by a common team objective. We use the standard team-return criterion:
\[
J_{s_0}(\pi)\defeq\mathbb{E}^{\pi}\!\left[\sum_{t=0}^{\infty}\gamma^t\sum_{i=1}^n \mathcal{R}^i(s_t,a_t,s_{t+1})\right],
\]
where $\pi=(\pi^1,\dots,\pi^n)$ is the joint policy. A joint policy $\pi^\star$ is \emph{team-optimal} iff $\pi^\star \in \arg\max_{\pi} J_{s_0}(\pi)$, that is, no other joint policy achieves higher expected discounted total reward from the initial state.

Throughout, for a distribution $\mu\in\Delta(Z)$, write
$\Supp(\mu)\defeq\{z\in Z\mid \mu(z)>0\}$ for its support, i.e.  the set of values that can occur with non-zero probability. An {\em execution} from $s$ is an alternating sequence $s_0a_0s_1a_1\cdots$ with $s_0=s$, $a_t\in d(s_t)$, and $s_{t+1}\in\Supp(\mathcal{P}(s_t,a_t))$ at every step; finite execution prefixes are defined similarly. The execution is consistent with $\pi$ if $\pi(a_t\mid s_t)>0$ at every step. We write $\IPath^\pi(s)$ for the set of state paths $s_0s_1\cdots$ induced by executions from $s$ that are consistent with $\pi$. The same joint policy induces a finite {\em Markov chain}~\cite{baier2008principles} $\mathcal{M}^{\pi}\defeq(\mathcal{S},s_0,\mathcal{P}^{\pi})$, where
$
\mathcal{P}^{\pi}(s,s')
\defeq
\sum_{a\in d(s)} \pi(a\mid s)\,\mathcal{P}(s,a)(s').
$

\paragraph*{Safe LTL.}
We express the desired global behaviour as a trace property in the safety fragment of \emph{Linear Temporal Logic}, denoted \SafeLTL. We consider a \emph{labelled stochastic game} obtained by extending a Markov game $\mathcal{M}$ with a finite global alphabet $\Ap$ and a global labelling function
$\mathcal{L}:\mathcal{S}\to 2^{\Ap}$. The shared safety objective is interpreted over this global labelling. For decentralised contracts we also fix local alphabets $\Ap_i\subseteq\Ap$ which specify the global propositions that agent~$i$'s local obligations may mention. Along a path, local observations are obtained by projecting the global labels to $\Ap_i$.
The syntax of the safety fragment is generated by the following grammar:
$$
\begin{array}{rcl}
\phi &::=& \true \mid \false \mid p \mid \neg p \mid \phi \wedge \phi \mid \phi \vee \phi \mid \Next \phi \mid \phi \WeakUntil \phi,
\end{array}
$$
where $p \in \Ap$, and $\true,\false$ denote the Boolean constants. We read $\Next$ as {\em next} and $\WeakUntil$ as {\em weak until}: $\phi\WeakUntil\psi$ requires $\phi$ until $\psi$, or $\phi$ forever if $\psi$ never occurs. Thus the fragment captures invariance-style properties: negation is restricted to atoms and temporal progression is expressed through next and weak until. This presentation is equivalent to the standard negation-normal Safety LTL fragment using $\Next$, $\Release$, and $\WeakUntil$~\cite{sistla1994safety}, since $\phi\Release\psi$ can be written as $\psi\WeakUntil(\phi\wedge\psi)$. We use the always operator $\Always \phi \defeq \phi \WeakUntil \false$. The connectives $\rightarrow$ and $\leftrightarrow$ are used only as standard abbreviations with their expansions normalised to the safety grammar, since negation is only allowed on atomic propositions.

Let $\Trace\defeq(2^{\Ap})^\omega$ be the set of traces over $\Ap$. Each path $\rho=s_0s_1s_2\cdots \in \IPath^\pi(s)$ induces the global trace $\Trace_{\mathcal{L}}(\rho)\defeq\mathcal{L}(s_0)\,\mathcal{L}(s_1)\,\mathcal{L}(s_2)\cdots \in \Trace$ and, for each agent $i$, the local trace $\Trace_{\mathcal{L}}(\rho)\!\upharpoonright_{\Ap_i}\defeq(\mathcal{L}(s_0)\cap\Ap_i)\,(\mathcal{L}(s_1)\cap\Ap_i)\,(\mathcal{L}(s_2)\cap\Ap_i)\cdots\in(2^{\Ap_i})^\omega$. We write $(\tau,k)\models \phi$ when formula $\phi$ holds at position $k$ of trace $\tau$, and $\tau\models\phi$ for satisfaction at position $0$. The inductive trace semantics are provided in Appendix~\ref{app:safe_ltl_semantics} as standard. In the main text we use the induced-chain satisfaction relation
$
\mathcal{M}^{\pi}\models\phi
\quad\text{iff}\quad
\Trace_{\mathcal{L}}(\rho)\models \phi
\text{ for every }\rho\in\IPath^\pi(s_0).
$
Thus satisfaction is universal over paths: a policy satisfies the safety formula only when all possible executions that it can generate satisfy the formula.

\paragraph{Problem Statement.} \label{sec:problem_statement}
In the labelled cooperative stochastic game defined above, fix a shared safety specification $\Phi_{\mathsf{safe}}\in\SafeLTL(\Ap)$. For a joint policy $\pi=(\pi^1,\dots,\pi^n)$, the safe policy set is
$
\Pi_{\mathsf{safe}}
\defeq
\left\{
\pi \;\middle|\; \mathcal{M}^{\pi}\models \Phi_{\mathsf{safe}}
\right\}.
$
The formal target is to find a 
team-return-optimal safe joint policy, that is:
$$
\pi^\star \in \arg\max_{\pi\in \Pi_{\mathsf{safe}}} J_{s_0}(\pi).
$$
Thus the objective is not merely to rule out unsafe executions: among policies whose induced traces all satisfy the temporal safety property, the learner should preserve as much discounted team return as possible. As in Example~\ref{ex:cartesian_safety_loses_coordination}, the optimum may require coordinated joint actions that a Cartesian decomposition of the central safe-action relation would discard. We target decentralised execution: each shield restricts one agent to locally admissible actions, while certification of the active contract ensures every joint execution admitted by the distributed masks satisfies $\Phi_{\mathsf{safe}}$.

\section{Contract Shielding} \label{sec:shield_synthesis}
Contract shield synthesis constructs decentralised action masks whose Cartesian product is safe under a globally certified contract. Rather than deploying the central safety-game action relation directly, the construction compiles candidate local obligations into automata and certifies local action rectangles through a contract-product fixed point. We use a finite \emph{safety abstraction}, obtainable by bisimulation~\cite{el2021abstraction}, that is exact for the contract shielding problem. Each contract supplies one temporal obligation per agent. Using the finite bad-prefix characterisation of safety properties~\cite{kupferman2001model}, each local $\SafeLTL$ obligation induces a finite deterministic safety monitor: its bad states recognise exactly the finite prefixes from which the obligation is irrecoverably violated. The shield is synthesised from their product with the environment and certifying non-reachability of unsafe states; the resulting fixed point certifies decentralised action interfaces as opposed to a central joint-action controller.

Figure~\ref{fig:tldr_pipeline} summarises our approach. The labelled game and global safety specification are compiled into contract-specific safety products with local obligations providing the assume-guarantee interface of Section~\ref{sec:ag_reasoning}; resulting contract products are solved and projected to decentralised masks in Section~\ref{sec:shield_construction}. The certified contracts then form the library used for learning-time selection in Section~\ref{sec:contract_updates}.

\begin{figure}[H]
  \centering
  \begin{adjustbox}{max width=\linewidth}
\begin{tikzpicture}[
    >=Latex,
    line width=0.75pt,
    node distance=0.95cm,
    box/.style={
        draw,
        rounded corners=1.5mm,
        fill=black!3,
        minimum height=0.96cm,
        text width=2.45cm,
        align=center,
        font=\scriptsize
    },
    arrow/.style={->, thick}
]
    \node[box] (spec) {\textbf{Global safety}\\$\Phi_{\mathsf{safe}}$ and automaton};
    \node[box, right=of spec] (search) {\textbf{Bounded enumeration}\\profiles $\mathcal{C}\in\mathcal{F}$};
    \node[box, right=of search, text width=2.62cm, minimum height=1.04cm] (fixed)
        {\textbf{Certification pass}\\retain certified; discard failures};
    \node[box, right=of fixed] (library) {\textbf{Certified library}\\all retained profiles and shields};
    \node[box, right=of library] (select) {\textbf{Selector}\\choose active profile};

    \draw[arrow] (spec) -- (search);
    \draw[arrow] (search) -- (fixed);
    \draw[arrow] (fixed) -- (library);
    \draw[arrow] (library) -- (select);
    \draw[arrow] (fixed.south) -- ++(0,-0.52) -|
        node[pos=0.28, below, font=\scriptsize] {next profile} (search.south);
    \draw[arrow] (select.south) -- ++(0,-0.42) -|
        node[pos=0.26, below, font=\scriptsize] {episode reset} (library.south);
\end{tikzpicture}
\end{adjustbox}
  \caption{High-level pipeline of contract-based compositional shielding.}
  \label{fig:tldr_pipeline}
\end{figure}

\subsection{Assume-Guarantee Contracts} \label{sec:ag_reasoning}
Assume-guarantee reasoning is a technique from the toolkit of \emph{compositional verification}. In the multi-agent setting we use it to control how much coordination information each local shield may rely on. We write $\AG{A}{C}{G}$ to mean that component~$C$ guarantees~$G$ under assumption~$A$. A local contract can express more than ``avoid a violation'': it can also encode coordination promises that make globally safe and high-return behaviour available under decentralised execution.

We begin by defining the local action rectangles used as shield interfaces:
\begin{definition}[Local Action Rectangle]\label{def:local_action_rectangle}
At an environment state $s\in\mathcal{S}$, a local action rectangle (or simply rectangle) is a non-empty Cartesian product
\[
R\defeq R^1\times\cdots\times R^n
\qquad\text{with}\qquad
\emptyset\neq R^i\subseteq d^i(s)
\quad\text{for every }i\in\Ag.
\]
Thus $R\subseteq d(s)$, and each factor $R^i$ is the local action mask exposed to agent~$i$.
\end{definition}

We write traces over the global alphabet as words in $(2^{\Ap})^\omega$, finite prefixes as words in $(2^{\Ap})^\ast$, and agent-$i$ observations as their pointwise projections to $(2^{\Ap_i})^\omega$. Next, we introduce local obligations and contracts.

\begin{definition}[Local Obligation]\label{def:local_obligation}
A local \SafeLTL{} obligation for agent $i$ is a formula $\varphi_i\in\SafeLTL(\Ap_i)$ over that agent's local alphabet.
\end{definition}
\begin{definition}[Contract]\label{def:contract}
A \SafeLTL{} contract is a tuple of local obligations, one for each agent,
\[
\mathcal{C}\defeq\langle \varphi_i\rangle_{i=1}^n
\in \prod_{i=1}^{n}\SafeLTL(\Ap_i).
\]
\end{definition}

A contract can certify safety only when the conjunction of its local obligations entails the shared global safety objective. Together, Definitions~\ref{def:local_obligation} and~\ref{def:contract}
fix the local syntactic objects used below. Their semantics is lifted to global traces by projection: for each component, define
$
\llbracket \varphi_i \rrbracket
\defeq
\{\tau\in\Trace \mid \tau\!\upharpoonright_{\Ap_i}\models \varphi_i\},
$
where $\tau\!\upharpoonright_{\Ap_i}$ is the pointwise projection of $\tau$ to
$\Ap_i$. For a contract $\mathcal{C}=\langle\varphi_i\rangle_{i=1}^n$, write
$
\llbracket \mathcal{C} \rrbracket \defeq \bigcap_{i=1}^{n}\llbracket \varphi_i \rrbracket .
$
Equivalently, for every global trace $\tau\in\Trace$,
$
\tau\in\llbracket \mathcal{C} \rrbracket
\quad\Longleftrightarrow\quad
\forall i\in\{1,\dots,n\},\ \tau\!\upharpoonright_{\Ap_i}\models\varphi_i.
$
This lets local obligations be interpreted on global traces by ignoring propositions outside the corresponding agent alphabet. For a global formula
$\Psi\in\SafeLTL(\Ap)$, write
$\mathcal{C}\models_{\!\uparrow}\Psi$ when
$ \forall \tau\in\Trace.
\left(\forall i\in\Ag,\ \tau\!\upharpoonright_{\Ap_i}\models\varphi_i\right)
\Longrightarrow
\tau\models\Psi
$. 
Semantic entailment involving local obligations is always understood through this lifted interpretation. Thus a contract is a decomposition of the global requirement into local promises.

\begin{definition}[Finite Contract Search Space]\label{def:finite_contract_search_space}
Fix a formula-depth bound $D$ and, for each agent $i$, a finite local formula language $\mathsf{Form}_i^{\le D}\subseteq\SafeLTL(\Ap_i)$. The finite contract search space is
$
\mathcal{C}_D
\defeq
\prod_{i=1}^{n}\mathsf{Form}_i^{\le D}.
$
\end{definition}

\begin{definition}[Circular Contract Assumptions]\label{def:circular_contract_assumptions}
For a candidate contract $\mathcal{C}=\langle\varphi_i\rangle_{i=1}^n$, the local assume-guarantee obligation of agent $i$ is
$
\AG{A_i^{\mathcal{C}}}{i}{\varphi_i},
$
where
$
A_i^{\mathcal{C}} \defeq \bigwedge_{j\neq i}\varphi_j.
$
The conjunction in $A_i^{\mathcal{C}}$ is semantic notation using the lifted
interpretation above: it denotes the intersection
$\bigcap_{j\neq i}\llbracket\varphi_j\rrbracket$ over global traces, and may
therefore combine obligations whose syntax ranges over different local
alphabets. When the index set is empty, the conjunction is $\true$.
\end{definition}
Thus agent $i$ may rely on the other agents satisfying their local obligations, while it must satisfy its own obligation. This is circular assume-guarantee reasoning: the obligations in $\mathcal{C}$ justify one another as a simultaneous contract rather than through a fixed linear order. Certification below checks the whole tuple at once: each agent may rely on the other obligations only because the fixed point proves that the projected shields preserve all obligations simultaneously, which enables sound circularity. The automata used for this certification are defined next.

\begin{definition}[Contract Safety Automata]\label{def:contract_safety_automata}
For a contract $\mathcal{C}=\langle\varphi_i\rangle_{i=1}^n$, each local obligation $\varphi_i$ is compiled into a contract safety automaton
$
\mathcal{A}_i^{\mathcal{C}}\defeq(Q_i,q_i^0,\Sigma_i,\delta_i,B_i),
$
where $Q_i$ is a finite set of states, $q_i^0\in Q_i$ is the initial state, $\Sigma_i\defeq 2^{\Ap_i}$, $\delta_i:Q_i\times\Sigma_i\to Q_i$ is the total transition function, and $B_i\subseteq Q_i$ is an absorbing set of bad states. Writing $\delta_i^\ast$ for the finite-word extension of $\delta_i$, the finite words that reach $B_i$ are exactly the bad prefixes of $\varphi_i$:
\[
\delta_i^\ast(q_i^0,w)\in B_i
\quad\Longleftrightarrow\quad
\forall \tau_i\in\Sigma_i^\omega.\; w\tau_i\not\models\varphi_i .
\]
Thus a local trace satisfies $\varphi_i$ precisely when no finite prefix reaches $B_i$. For compactness, write $\ell_i(s)\defeq\mathcal{L}(s)\cap\Ap_i$ for the projection of the global label at $s$ to agent~$i$'s alphabet.
\end{definition}

\begin{definition}[Contract Product]\label{def:contract_product}
For a candidate contract $\mathcal{C}=\langle\varphi_i\rangle_{i=1}^n$ with contract
safety automata $\mathcal{A}_i^{\mathcal{C}}$ from
Definition~\ref{def:contract_safety_automata}, the contract product has state space
$
X^{\mathcal{C}}\defeq\mathcal{S}\times Q_1\times\cdots\times Q_n.
$
A product state is written $x^{\mathcal{C}}=(s,q_1,\dots,q_n)$, where $q_i$ is the current automaton state for agent $i$'s obligation. The initial contract product state is
$
x_0^{\mathcal{C}}
\defeq
(s_0,\delta_1(q_1^0,\ell_1(s_0)),\dots,\delta_n(q_n^0,\ell_n(s_0))).
$
A contract product state is locally safe if $q_i\notin B_i$ for every agent. For a legal joint action $a=(a^1,\dots,a^n)\in d(s)$, the qualitative successor set is
\[
\mathrm{Post}_{\mathcal{C}}(x^{\mathcal{C}},a)
\defeq
\left\{
(s',\delta_1(q_1,\ell_1(s')),\dots,\delta_n(q_n,\ell_n(s')))
\;\middle|\;
s'\in\Supp(\mathcal{P}(s,a))
\right\}.
\]
A contract-product execution from $x_0^{\mathcal{C}}$ is an alternating sequence
$
\langle x_0^{\mathcal{C}}a_0x_1^{\mathcal{C}}a_1\cdots \rangle
$
such that, writing $x_t^{\mathcal{C}}=(s_t,\bar q_t)$, for every $t\ge 0$,
\[
a_t\in d(s_t)
\qquad\text{and}\qquad
x_{t+1}^{\mathcal{C}}\in\mathrm{Post}_{\mathcal{C}}(x_t^{\mathcal{C}},a_t).
\]
\end{definition}
The contract product in Definition~\ref{def:contract_product} is the natural Markov state space for shielded controllers.
In decentralised learning, agent~$i$ uses the active contract identifier together with an augmentation sufficient to recover its projected mask.

\subsection{Contract Product Shields} \label{sec:shield_construction}
We now turn a candidate contract into actual local shields. For a fixed $\mathcal{C}$, the shield construction computes one shared assume-guarantee fixed point over $X^{\mathcal{C}}$ and then projects the resulting action relation to each agent.

\begin{definition}[Contract Predecessor]\label{def:contract_predecessor}
At a contract product state $x^{\mathcal{C}}=(s,q_1,\dots,q_n)$, let $R(x^{\mathcal{C}})$ range over
local action rectangles at $s$ in the sense of
Definition~\ref{def:local_action_rectangle}.
For $W\subseteq X^{\mathcal{C}}$, such a rectangle is $W$-closed at $x^{\mathcal{C}}$ if
\[
\mathrm{Post}_{\mathcal{C}}(x^{\mathcal{C}},a)\subseteq W
\text{ for every } a\in R(x^{\mathcal{C}}).
\]
Let $\mathsf{Rect}_{\mathcal{C}}(W,x^{\mathcal{C}})$ be the set of $W$-closed local action rectangles at $x^{\mathcal{C}}$.
The assume-guarantee predecessor operator is
\[
\mathcal{T}_{\mathcal{C}}(W)
\defeq
\left\{
x^{\mathcal{C}}\in X^{\mathcal{C}} \;\middle|\;
\begin{array}{@{}l@{}}
x^{\mathcal{C}}\text{ is locally safe and}\\
\mathsf{Rect}_{\mathcal{C}}(W,x^{\mathcal{C}})\neq\emptyset
\end{array}
\right\}.
\]
\end{definition}

\begin{definition}[Rectangular Winning Region]\label{def:rectangular_winning_region}
The rectangular winning region of the contract product is the greatest fixed
point
$
\WinningRegion_{\mathcal{C}} \defeq \nu W.\,\mathcal{T}_{\mathcal{C}}(W).
$
It is the largest locally safe region from which the shield can offer a
non-empty Cartesian action interface closed under all induced successors. For
$x^{\mathcal{C}}\in \WinningRegion_{\mathcal{C}}$, write
$
\mathsf{Rect}_{\mathcal{C}}(x^{\mathcal{C}})\defeq\mathsf{Rect}_{\mathcal{C}}(\WinningRegion_{\mathcal{C}},x^{\mathcal{C}})
$
for the winning rectangles available at $x^{\mathcal{C}}$.
\end{definition}

The operator $\mathcal{T}_{\mathcal{C}}$ is monotone. Since $X^{\mathcal{C}}$ is
finite, the descending iteration $W_0=X^{\mathcal{C}}$ and
$W_{k+1}=\mathcal{T}_{\mathcal{C}}(W_k)$ stabilises at
$\WinningRegion_{\mathcal{C}}$, so
$\WinningRegion_{\mathcal{C}}=\mathcal{T}_{\mathcal{C}}(\WinningRegion_{\mathcal{C}})$.
This fixed point is the certificate used below.

\begin{definition}[Certified Contract]\label{def:certified_contract}
A candidate contract $\mathcal{C}=\langle\varphi_i\rangle_{i=1}^n\in\mathcal{C}_D$ is \emph{certified} if:
\begin{enumerate}
    \item the local obligations entail the global safety objective under lifted
    semantics, $\mathcal{C}\models_{\!\uparrow}\Phi_{\mathsf{safe}}$;
    \item the initial contract product state $x_0^{\mathcal{C}}$ lies in $\WinningRegion_{\mathcal{C}}$.
\end{enumerate}
\end{definition}

The entailment condition makes the local promises sufficient for global safety. The winning-region condition makes them operational: from the initial state, a non-empty decentralised interface can keep the active contract outside all bad states. Thus certification fails only when the promises are too weak or not enforceable by a rectangular shield from the start.
This is the initial-state condition used by Algorithm~\ref{alg:initial_contract}. For every $x^{\mathcal{C}}\in \WinningRegion_{\mathcal{C}}$, choose any certified rectangle $R_{\mathcal{C}}(x^{\mathcal{C}})\in\mathsf{Rect}_{\mathcal{C}}(x^{\mathcal{C}})$. This membership is the closure certificate: every joint action in $R_{\mathcal{C}}(x^{\mathcal{C}})$ is legal and has all contract-product successors in $\WinningRegion_{\mathcal{C}}$. Where multiple rectangles satisfy this property, we use a deterministic tie-breaker which orders choices lexicographically maximising $(\sum_i |R^i|,\prod_i |R^i|,(R^1,\dots,R^n))$, with each $R^i$ ordered by the local action order. The safety results hold in generality, for any choice satisfying the closure certificate. The tie-breaker allows fixing a reproducible shield.
\begin{definition}[Projected Local Shield]\label{def:projected_local_shield}
For a certified contract $\mathcal{C}$ and product state $x^{\mathcal{C}}\in\WinningRegion_{\mathcal{C}}$, the
deployed local shield for agent $i$ is the projection
$
\ContractShieldMask{i}(x^{\mathcal{C}}) \defeq R_{\mathcal{C}}^i(x^{\mathcal{C}}).
$
The shield is defined only on $\WinningRegion_{\mathcal{C}}$.
\end{definition}

\begin{definition}[Local Mask Identifiability]\label{def:local_mask_identifiability}
For a certified contract $\mathcal{C}$, the projected shield is locally mask-identifiable for agent $i$ if some function of the active contract identifier and agent-$i$'s observation history returns $\ContractShieldMask{i}(x_t^{\mathcal{C}})$ for every reachable contract-product execution and time $t$.
\end{definition}
Definition~\ref{def:projected_local_shield} fixes the local mask exposed to agent~$i$: the $i$th factor of a certified rectangle. Definition~\ref{def:local_mask_identifiability} is the corresponding information condition for decentralised execution; it requires that this factor be computable from the active contract identifier and the information available to agent~$i$
(for example, its augmented local contract-product state), without necessitating agent~$i$ to reconstruct the other agents' masks.

\begin{definition}[Admitted Joint Actions]\label{def:admitted_joint_actions}
For a certified contract $\mathcal{C}$ and product state $x^{\mathcal{C}}\in\WinningRegion_{\mathcal{C}}$, the
joint-action relation admitted by the deployed local shields is
$
\mathsf{Adm}_{\mathcal{C}}(x^{\mathcal{C}})
\defeq
\prod_{i\in\Ag}\ContractShieldMask{i}(x^{\mathcal{C}}).
$
\end{definition}
\begin{restatable}[Circular Compositional Soundness\RestatedTitleSuffix]{theorem}{circularCompositionalSoundnessTheorem}\label{thm:circular_compositional_soundness}
Let $\mathcal{C}=\langle\varphi_i\rangle_{i=1}^n$ be a certified contract. For any contract-product execution
$x_0^{\mathcal{C}}a_0x_1^{\mathcal{C}}a_1\cdots$ starting from $x_0^{\mathcal{C}}$, if $a_t\in\mathsf{Adm}_{\mathcal{C}}(x_t^{\mathcal{C}})$ at every time $t$, then every reachable contract product state remains in $\WinningRegion_{\mathcal{C}}$. Consequently, the local obligations hold along the execution, and the induced global trace satisfies $\Phi_{\mathsf{safe}}$.
\end{restatable}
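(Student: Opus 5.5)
We argue by induction on $t$ that $x_t^{\mathcal{C}}\in\WinningRegion_{\mathcal{C}}$ for every $t\ge 0$, and then read off the obligations from local safety and the bad-prefix characterisation of Definition~\ref{def:contract_safety_automata}.

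\emph{Invariant.} The base case $x_0^{\mathcal{C}}\in\WinningRegion_{\mathcal{C}}$ is exactly condition~2 of Definition~\ref{def:certified_contract}. For the inductive step, assume $x_t^{\mathcal{C}}\in\WinningRegion_{\mathcal{C}}$. Then the deployed masks are defined at $x_t^{\mathcal{C}}$. By Definitions~\ref{def:projected_local_shield} and~\ref{def:admitted_joint_actions},
\[
\mathsf{Adm}_{\mathcal{C}}(x_t^{\mathcal{C}})=\prod_i R^i_{\mathcal{C}}(x_t^{\mathcal{C}})=R_{\mathcal{C}}(x_t^{\mathcal{C}}),
\]
because a Cartesian product is recovered from its factors. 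The chosen rectangle lies in $\mathsf{Rect}_{\mathcal{C}}(\WinningRegion_{\mathcal{C}},x_t^{\mathcal{C}})$, so every $a\in R_{\mathcal{C}}(x_t^{\mathcal{C}})$ is legal and satisfies $\mathrm{Post}_{\mathcal{C}}(x_t^{\mathcal{C}},a)\subseteq\WinningRegion_{\mathcal{C}}$. Since $a_t\in\mathsf{Adm}_{\mathcal{C}}(x_t^{\mathcal{C}})$ and, by the definition of a contract-product execution, $x_{t+1}^{\mathcal{C}}\in\mathrm{Post}_{\mathcal{C}}(x_t^{\mathcal{C}},a_t)$, we get $x_{t+1}^{\mathcal{C}}\in\WinningRegion_{\mathcal{C}}$. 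The point to make explicit is that the \emph{projection} step loses nothing: the product of the projections is the certified rectangle itself. This is precisely where rectangularity in $\mathcal{T}_{\mathcal{C}}$ is used, and it is why circularity is sound. Every agent's assumption is discharged by the same fixed point rather than by an ordering among agents.

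\emph{Local safety.} The fixed-point identity $\WinningRegion_{\mathcal{C}}=\mathcal{T}_{\mathcal{C}}(\WinningRegion_{\mathcal{C}})$ means that every winning state is locally safe. Writing $x_t^{\mathcal{C}}=(s_t,q_{1,t},\dots,q_{n,t})$, we therefore have $q_{i,t}\notin B_i$ for all $i$ and $t$. Next, a routine induction on the definitions of $x_0^{\mathcal{C}}$ and $\mathrm{Post}_{\mathcal{C}}$ shows that
\[
q_{i,t}=\delta_i^\ast\bigl(q_i^0,\ell_i(s_0)\cdots\ell_i(s_t)\bigr).
\]
Consequently, no finite prefix of the local trace $\Trace_{\mathcal{L}}(\rho)\!\upharpoonright_{\Ap_i}$ of $\rho=s_0s_1\cdots$ reaches $B_i$.

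\emph{From local safety to the obligations.} By Definition~\ref{def:contract_safety_automata}, no prefix of the local trace is a bad prefix of $\varphi_i$. Since $\varphi_i$ is a safety property, any trace that violates it has a bad prefix, so the local trace satisfies $\varphi_i$. This is the only semantic step, and the one I would expect to need care: it relies on the stated consequence that a local trace satisfies $\varphi_i$ exactly when no finite prefix reaches $B_i$. That equivalence holds because $\SafeLTL$ formulas define closed sets (Sistla; Kupferman--Vardi), so I would cite it rather than re-prove it.

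\emph{Global conclusion.} Having $\Trace_{\mathcal{L}}(\rho)\!\upharpoonright_{\Ap_i}\models\varphi_i$ for all $i$ means $\Trace_{\mathcal{L}}(\rho)\in\llbracket\mathcal{C}\rrbracket$. Condition~1 of certification, $\mathcal{C}\models_{\!\uparrow}\Phi_{\mathsf{safe}}$, then gives $\Trace_{\mathcal{L}}(\rho)\models\Phi_{\mathsf{safe}}$.
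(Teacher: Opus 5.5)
Your proposal is correct and follows the paper's proof essentially step for step. Both arguments run an induction on $t$ from certification condition~2, using $\mathsf{Adm}_{\mathcal{C}}(x_t^{\mathcal{C}})=R_{\mathcal{C}}(x_t^{\mathcal{C}})$ and the $\WinningRegion_{\mathcal{C}}$-closure of the chosen rectangle, then derive local safety from the fixed-point identity, the automaton-state identity $q_{i,t}=\delta_i^\ast(q_i^0,\ell_i(s_0)\cdots\ell_i(s_t))$, the bad-prefix characterisation, and lifted entailment. Your explicit note that going from ``no bad prefix'' to satisfaction of $\varphi_i$ relies on $\SafeLTL$ formulas defining safety (closed) properties slightly sharpens a step the paper takes directly from Definition~\ref{def:contract_safety_automata}.
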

The proof is available in Appendix~\ref{appsec:circular_compositional_soundness}.
Informally, the theorem says that a certified circular contract behaves like a joint safety proof split across local masks. The invariant is simple: start inside the winning region, and every projected shield action keeps the next product state inside it. Thus, once the rectangle has been certified, the agents do not need any coordination — any joint action admitted by the local masks is still inside the certified successor region.

\section{Learning with Contract Shields} \label{sec:contract_updates}
Building on Section~\ref{sec:shield_synthesis}, this section gives the concrete pipeline: bounded offline construction of a certified library in Section~\ref{subsec:bounded_certified_profile_search}, its permissiveness and optimality interpretation in Section~\ref{subsec:permissiveness_metrics}, and online contract selection in Section~\ref{subsec:selector_bandit}.

\subsection{Library Construction: Bounded Certified Profile Search}\label{subsec:bounded_certified_profile_search}
The finite contract search space $\mathcal{C}_D$ is finite, with candidates $\mathcal{C}$ drawn from the finite search space of
Definition~\ref{def:finite_contract_search_space}, so we enumerate candidate local-obligation tuples before learning and certify each tuple using the fixed-point construction above. For any finite ordered candidate family $\mathcal{F}\subseteq\mathcal{C}_D$, the certified library induced by $\mathcal{F}$ is
$
\mathcal{L}_{\mathsf{cert}}(\mathcal{F})
\defeq
\{\mathcal{C}\in\mathcal{F}\mid \mathcal{C}\ \text{is certified}\},
$
with the order inherited from $\mathcal{F}$. Since $\mathcal{F}$ is finite, so is $\mathcal{L}_{\mathsf{cert}}(\mathcal{F})$; moreover, membership in the library is exactly certification, so every retained contract satisfies the hypotheses of Theorem~\ref{thm:circular_compositional_soundness}. The selector
therefore uses this catalogue of safe shield configurations without constructing or validating new safety certificates online. The algorithm enumerates scoped candidate subsets in a deterministic prioritised order subject to the configured profile limit and formula depth limit; this order uses a temporal-form heuristic that prefers persistent $G\psi$ obligations before next-state $X\psi$ obligations. Algorithm~\ref{alg:initial_contract} summarises this finite preprocessing stage. Return-optimal behaviour does not necessarily arise from the first certified profile. A stricter contract can enable solving the decentralised learner's coordination problem, so the learning-time selector optimises over the retained library.

\begin{algorithm}[t]
\DontPrintSemicolon
\caption{Certified profile library construction}
\label{alg:initial_contract}
\KwIn{ordered bounded profile family $\mathcal{F}\subseteq\mathcal{C}_D$, initial abstract states $S_0$, global safety formula $\Phi_{\mathsf{safe}}$}
\KwOut{initial certified profile $\mathcal{C}_0$ and certified library $\mathcal{L}_{\mathsf{cert}}$}
$\mathcal{L}_{\mathsf{cert}}\gets\emptyset$\;
\ForEach{$\mathcal{C}\defeq\langle\varphi_i\rangle_{i=1}^n \in \mathcal{F}$ in search order}{
  \If{$\mathcal{C}\not\models_{\!\uparrow}\Phi_{\mathsf{safe}}$}{
    \textbf{continue}\;
  }
  compile the local obligation automata $\mathcal{A}_1^{\mathcal{C}},\dots,\mathcal{A}_n^{\mathcal{C}}$\;
  construct the contract product $X^{\mathcal{C}}$ and greatest fixed point $\WinningRegion_{\mathcal{C}}$\;
  \If{every initial contract state induced by $S_0$ lies in $\WinningRegion_{\mathcal{C}}$}{
    derive the local shields from the certified rectangles $R_{\mathcal{C}}(x^{\mathcal{C}})$\;
    add $\mathcal{C}$ to $\mathcal{L}_{\mathsf{cert}}$\;
  }
}
\If{$\mathcal{L}_{\mathsf{cert}}=\emptyset$}{
  \Return failure\;
}
$\mathcal{C}_0\gets$ the first profile in $\mathcal{L}_{\mathsf{cert}}$ in certification order\;
\Return $\mathcal{C}_0$ and $\mathcal{L}_{\mathsf{cert}}$\;
\end{algorithm}

\subsection{Permissiveness and Optimality}\label{subsec:permissiveness_metrics}

Definition~\ref{def:cardinality_permissiveness} gives a statewise mask-size metric, obtained simply by counting the sizes of local action masks relative to the protocol-available action sets.
\begin{definition}[Cardinality Permissiveness]\label{def:cardinality_permissiveness}
At a shield state $x^{\mathcal{C}}=(s,\bar q)$, the cardinality permissiveness of
agent~$i$'s local mask is
$
\rho_i(x^{\mathcal{C}})
\defeq
\frac{|\ContractShieldMask{i}(x^{\mathcal{C}})|}
     {|d^i(s)|}.
$
\end{definition}
\begin{figure}[t]
  \centering
  \definecolor{permissivenessquestionblue}{RGB}{50,88,122}
\definecolor{permissivenessquestionbluelight}{RGB}{232,241,248}
\definecolor{permissivenessdefinitionorange}{RGB}{220,128,54}
\definecolor{permissivenessdefinitionorangelight}{RGB}{255,245,235}

\begin{adjustbox}{max width=\linewidth}
\begin{tikzpicture}[
    >=Latex,
    node distance=0.85cm,
    problem/.style={
        draw,
        rounded corners=1.1mm,
        minimum height=1.65cm,
        text width=4.65cm,
        align=left,
        font=\scriptsize,
        inner sep=5pt
    },
    safety/.style={problem, fill=permissivenessquestionbluelight, draw=permissivenessquestionblue!85!black},
    reward/.style={problem, fill=permissivenessdefinitionorangelight, draw=permissivenessdefinitionorange!85!black},
    arrow/.style={->, thick, draw=black!75}
]
    \node[safety] (safeproblem) {
        \textbf{1. Safety obligations}\\
        Find local obligations
        $C^{\mathsf{safe}}\defeq(\varphi_1,\ldots,\varphi_n)$
        with
        $\bigwedge_i\varphi_i\Rightarrow\Phi_{\mathsf{safe}}$.
    };

    \node[reward, right=0.85cm of safeproblem] (rewardproblem) {
        \textbf{2. Reward-aware obligations}\\
        Find a certified profile $C^\star$ for
        reward $+$ safety; it may expose fewer actions if that protects
        high-return coordination.
    };

    \draw[arrow, dashed] (safeproblem.east) -- node[above, font=\scriptsize] {refine} (rewardproblem.west);
\end{tikzpicture}
\end{adjustbox}
  \caption{Safety certification and reward-aware contract selection are distinct objectives.}
  \label{fig:permissiveness_objective}
\end{figure}

Notice cardinality permissiveness is not the optimisation objective: $\rho_i(x^{\mathcal{C}})=1$ means no protocol-available primitive action of agent~$i$ is removed at $x^{\mathcal{C}}$, while smaller values indicate more intervention. A large mask can preserve many low-value actions while excluding coordinated high-return behaviours, and a stricter certified profile can make useful coordination easier to learn. Contract selection therefore optimises observed team return over certified profiles, not total mask cardinality. The relevant completeness question is whether some certified contract preserves the globally optimal safe behaviour. We formalise this by comparing policies admitted by certified decentralised shields.

\begin{definition}[Contract-Compatible Policies]\label{def:contract_compatible_policy}
Let $\mathcal{C}$ be a certified contract. For a joint policy $\pi$, let $\mathsf{Reach}_{\mathcal{C}}(\pi)$ be the least subset of $\WinningRegion_{\mathcal{C}}$ containing
$x_0^{\mathcal{C}}$ and closed under admitted $\pi$-successors: if
$x^{\mathcal{C}}=(s,\bar q)\in\mathsf{Reach}_{\mathcal{C}}(\pi)$,
$a\in\Supp(\pi(\cdot\mid s))\cap\mathsf{Adm}_{\mathcal{C}}(x^{\mathcal{C}})$, and
$x'\in\mathrm{Post}_{\mathcal{C}}(x^{\mathcal{C}},a)$, then
$x'\in\mathsf{Reach}_{\mathcal{C}}(\pi)$. The policies compatible with $\mathcal{C}$ are
\[
\Pi(\mathcal{C})
\defeq
\left\{
\pi \;\middle|\;
\forall x^{\mathcal{C}}=(s,\bar q)\in\mathsf{Reach}_{\mathcal{C}}(\pi),\
\Supp(\pi(\cdot\mid s))\subseteq\mathsf{Adm}_{\mathcal{C}}(x^{\mathcal{C}})
\right\}.
\]
\end{definition}

For a deterministic Markov policy $\pi$, write $a_\pi(s)$ for its unique joint action at state $s$. Then $\pi\in\Pi(\mathcal{C})$ iff, for every
$x^{\mathcal{C}}=(s,\bar q)\in\mathsf{Reach}_{\mathcal{C}}(\pi)$, we have
$
a_\pi(s)\in\mathsf{Adm}_{\mathcal{C}}(x^{\mathcal{C}}).
$
Equivalently, each local action $a^i_\pi(s)$ lies in the deployed local mask
$\ContractShieldMask{i}(x^{\mathcal{C}})$ at every reachable contract-product state.

\begin{restatable}[Optimal Safe Recovery\RestatedTitleSuffix]{theorem}{optimalSafeRecoveryTheorem}\label{thm:optimal_safe_recovery}
Let $\mathcal{L}_{\mathsf{cert}}$ be a certified library, and let
$\pi^\star\in\arg\max_{\pi\in\Pi_{\mathsf{safe}}}J_{s_0}(\pi)$ be a
deterministic Markov policy. If some $\mathcal{C}^\star\in\mathcal{L}_{\mathsf{cert}}$ satisfies $\pi^\star\in\Pi(\mathcal{C}^\star)$, then
$
\max\left\{
J_{s_0}(\pi)
\;\middle|\;
\mathcal{C}\in\mathcal{L}_{\mathsf{cert}},\ \pi\in\Pi(\mathcal{C})
\right\}
=
J_{s_0}(\pi^\star).
$
Equivalently, when empty inner feasible sets are ignored,
\[
\max_{\mathcal{C}\in\mathcal{L}_{\mathsf{cert}}}\max_{\pi\in\Pi(\mathcal{C})}J_{s_0}(\pi)
=J_{s_0}(\pi^\star).
\]
\end{restatable}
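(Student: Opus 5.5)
The plan is a two-inequality argument. Write $V^\star\defeq J_{s_0}(\pi^\star)$ and let $\mathcal{V}$ denote the set of values $J_{s_0}(\pi)$ with $\mathcal{C}\in\mathcal{L}_{\mathsf{cert}}$ and $\pi\in\Pi(\mathcal{C})$.

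\textbf{Lower bound.} Here I will use the hypothesis directly. Since $\pi^\star\in\Pi(\mathcal{C}^\star)$ and $\mathcal{C}^\star\in\mathcal{L}_{\mathsf{cert}}$, the value $V^\star$ lies in $\mathcal{V}$. Hence $\sup\mathcal{V}\ge V^\star$.

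\textbf{Upper bound.} I will show that every $\pi\in\Pi(\mathcal{C})$, for any $\mathcal{C}\in\mathcal{L}_{\mathsf{cert}}$, lies in $\Pi_{\mathsf{safe}}$, which gives $J_{s_0}(\pi)\le V^\star$ by optimality of $\pi^\star$ over $\Pi_{\mathsf{safe}}$. Fix $\rho=s_0s_1\cdots\in\IPath^\pi(s_0)$ with witnessing actions $a_t$, so $\pi(a_t\mid s_t)>0$. Lift it to the contract product by setting $q_{i,0}\defeq\delta_i(q_i^0,\ell_i(s_0))$ and $q_{i,t+1}\defeq\delta_i(q_{i,t},\ell_i(s_{t+1}))$, which yields a contract-product execution from $x_0^{\mathcal{C}}$. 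By induction on $t$ I will show that $x_t^{\mathcal{C}}\in\mathsf{Reach}_{\mathcal{C}}(\pi)$ and $a_t\in\mathsf{Adm}_{\mathcal{C}}(x_t^{\mathcal{C}})$:
\begin{itemize}
\item The base case holds by definition, since $x_0^{\mathcal{C}}\in\WinningRegion_{\mathcal{C}}$ by certification.
\item If $x_t^{\mathcal{C}}\in\mathsf{Reach}_{\mathcal{C}}(\pi)$, then compatibility gives $a_t\in\Supp(\pi(\cdot\mid s_t))\subseteq\mathsf{Adm}_{\mathcal{C}}(x_t^{\mathcal{C}})$.
\item The closure rule of $\mathsf{Reach}_{\mathcal{C}}(\pi)$ then puts $x_{t+1}^{\mathcal{C}}\in\mathrm{Post}_{\mathcal{C}}(x_t^{\mathcal{C}},a_t)$ into $\mathsf{Reach}_{\mathcal{C}}(\pi)$.
\end{itemize}
One point needs checking: closure of $\mathsf{Reach}_{\mathcal{C}}(\pi)$ is stated inside $\WinningRegion_{\mathcal{C}}$. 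I will discharge it either by Theorem~\ref{thm:circular_compositional_soundness} or directly from the rectangle closure certificate, since $\mathsf{Adm}_{\mathcal{C}}(x)=R_{\mathcal{C}}(x)$ is $\WinningRegion_{\mathcal{C}}$-closed. With every step admitted, Theorem~\ref{thm:circular_compositional_soundness} yields $\Trace_{\mathcal{L}}(\rho)\models\Phi_{\mathsf{safe}}$. Since $\rho$ was arbitrary, $\mathcal{M}^\pi\models\Phi_{\mathsf{safe}}$, i.e.\ $\pi\in\Pi_{\mathsf{safe}}$.

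\textbf{Attainment.} Combining the two bounds gives $\sup\mathcal{V}=V^\star$, and the supremum is attained by $(\mathcal{C}^\star,\pi^\star)$, so the max exists and equals $V^\star$. For the iterated form, drop the $\mathcal{C}$ with $\Pi(\mathcal{C})=\emptyset$. Each remaining inner supremum is at most $V^\star$ by the upper bound. The inner max at $\mathcal{C}^\star$ is attained at $\pi^\star$, and the outer max ranges over a finite library. I will treat a possibly non-attained inner max at other $\mathcal{C}$ as a supremum; this is harmless because it is bounded by $V^\star$.

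The main obstacle is the upper-bound step, which must bridge two notions. $\Pi(\mathcal{C})$ constrains the support of $\pi$ only on the contract-product reachable set, while $\Pi_{\mathsf{safe}}$ quantifies over all paths of the Markov chain $\mathcal{M}^\pi$. The induction above is what identifies every $\pi$-consistent path with an admitted contract-product execution. The care needed there is that the lift is deterministic, because the automata $\delta_i$ are deterministic, so each path has a unique product lift and Theorem~\ref{thm:circular_compositional_soundness} applies to it.
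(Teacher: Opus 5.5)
Your proposal is correct and follows the paper's proof. The lower bound comes from the feasible pair $(\mathcal{C}^\star,\pi^\star)$, and the upper bound from showing $\Pi(\mathcal{C})\subseteq\Pi_{\mathsf{safe}}$ for every certified $\mathcal{C}$, by induction along $\mathsf{Reach}_{\mathcal{C}}(\pi)$ followed by Theorem~\ref{thm:circular_compositional_soundness}. You also make explicit two steps the paper leaves implicit: the lift of each path in $\IPath^\pi(s_0)$ to its unique contract-product execution, and the check, via the rectangle closure certificate, that $\mathsf{Reach}_{\mathcal{C}}(\pi)$ stays inside $\WinningRegion_{\mathcal{C}}$.
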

The proof is available in Appendix~\ref{appsec:optimal_safe_recovery}. In particular, suppose $\pi^\star$ is a deterministic Markov optimum in $\Pi_{\mathsf{safe}}$, the local alphabets are action-separating (e.g. by folding the last action into the state or using a non-Markovian labelling function) for $\pi^\star$, and the depth bound $D$ and candidate family are rich enough that $\mathcal{L}_{\mathsf{cert}}$ contains a certified contract $\mathcal{C}^\star$ with
\[
\ContractShieldMask{i}(x^{\mathcal{C}^\star})=\{a^i_{\pi^\star}(s)\}
\qquad
\text{for every reachable }x^{\mathcal{C}^\star}=(s,\bar q)\text{ and every }i\in\Ag,
\]
then $\pi^\star\in\Pi(\mathcal{C}^\star)$, so Theorem~\ref{thm:optimal_safe_recovery} applies. The local masks also enforce the globally optimal safe joint action on all reachable states. These statements concern deterministic Markov optima. For the finite discounted, fully observable cooperative concurrent stochastic games considered in our setting, policy randomisation is not required to attain the optimal team value: an optimal deterministic Markov joint policy exists, although stochastic policies remain useful during learning~\cite{puterman2014markov,sutton1998reinforcement}.

\subsection{Bandit Selector} \label{subsec:selector_bandit}
The learning-time selector is an outer loop over the certified library: it chooses which pre-certified contract (shield) governs each training block, while the inner MARL algorithm continues updating its policies under the currently active shields. Each certified contract is treated as an arm. Pulling arm $\mathcal{C}$ means training for one dwell block under $\mathcal{C}$'s shield and measuring the resulting team return. The selector is parameterised by:
\begin{itemize}[leftmargin=*,nosep]
  \item the finite certified library $\mathcal{L}_{\mathsf{cert}}$ of candidate contracts;
  \item the initial certified contract $\mathcal{C}_0\in\mathcal{L}_{\mathsf{cert}}$;
  \item the warm-up horizon $H_0$, measured in completed episodes;
  \item the dwell length $H$, measured in completed episodes per arm pull;
  \item the discount factor $\eta\in(0,1]$ for historical block statistics; and
  \item the exploration coefficient $\beta\ge 0$.
\end{itemize}
Let $E_b$ be the completed episodes in a dwell block $b$ run under
contract $\mathcal{C}_b$, and let $R_e^i$ be the episode return of agent~$i$ in episode~$e$. The selector uses the normalised team-return score
$
y_b
\defeq
\overline{R}_b
\defeq
\frac{1}{|E_b|\,n}\sum_{e\in E_b}\sum_{i=1}^n R_e^i,
$
which is equivalent to the usual team return up to the constant factor $n$. The resulting bandit is \emph{non-stationary}: an arm's payoff changes as the inner policies continue to learn under the active shields. Discounting therefore estimates recent block performance rather than a fixed stationary arm mean, following discounted UCB methods for non-stationary bandits~\cite{kocsis2006discounted,garivier2011upper}. For each certified contract $\mathcal{C}$, maintain discounted count and score statistics $N_{\mathcal{C}}$ and $S_{\mathcal{C}}$. When a block under $\mathcal{C}_b$ completes, all arms are first discounted by setting $N_{\mathcal{C}}\leftarrow\eta N_{\mathcal{C}}$ and $S_{\mathcal{C}}\leftarrow\eta S_{\mathcal{C}}$ for every $\mathcal{C}\in\mathcal{L}_{\mathsf{cert}}$, and the observed arm is then updated by
\[
N_{\mathcal{C}_b} \leftarrow N_{\mathcal{C}_b} + 1,\qquad
S_{\mathcal{C}_b} \leftarrow S_{\mathcal{C}_b} + y_b,\qquad
\widehat{\mu}_{\mathcal{C}'} \defeq S_{\mathcal{C}'}/N_{\mathcal{C}'},
\]
where $\widehat{\mu}_{\mathcal{C}'}$ is defined for each $\mathcal{C}'$ with $N_{\mathcal{C}'}>0$. After
$H_0$ completed warm-up episodes under $\mathcal{C}_0$, the selector accumulates dwell blocks of $H$ completed episodes, tries certified contracts whose identifiers have not yet been visited, then choosing
\[
\mathcal{C}_{b+1}
\in
\arg\max_{\substack{\mathcal{C}\in\mathcal{L}_{\mathsf{cert}}\\N_{\mathcal{C}}>0}}
\left(
\widehat{\mu}_{\mathcal{C}}
+
\beta
\sqrt{
\frac{\log(1+\max\{1,\sum_{\mathcal{C}'}N_{\mathcal{C}'}\})}{N_{\mathcal{C}}}
}
\right),
\]
where $\beta\ge 0$ is the exploration coefficient. Ties are resolved deterministically, preferring to keep the current contract and otherwise using the certified-library order. Algorithm~\ref{alg:contract-update} in Appendix~\ref{app:technical_remarks} gives the discounted-UCB update.

\begin{restatable}[Safety of Certified Selection\RestatedTitleSuffix]{theorem}{certifiedSelectionSafetyTheorem}\label{thm:certified_selection_safety}
Let $\mathcal{L}_{\mathsf{cert}}$ be a certified library. Consider any sequence of episodes in which each episode is assigned a contract $\mathcal{C}\in\mathcal{L}_{\mathsf{cert}}$, starts from an initial product state covered by $\mathcal{C}$'s certification, the shield for $\mathcal{C}$ is deployed for the whole episode, every generated joint action lies in $\mathsf{Adm}_{\mathcal{C}}(x_t^{\mathcal{C}})$ at the current contract product state, and contract switches occur only at episode reset boundaries. Then no generated episode prefix is a bad prefix of the global safety objective. Under the standard per-episode infinite-trace semantics, every episode satisfies $\Phi_{\mathsf{safe}}$.
\end{restatable}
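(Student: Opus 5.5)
The plan is to reduce the statement episode by episode to Theorem~\ref{thm:circular_compositional_soundness}, using that contract switches happen only at reset boundaries, so each episode is governed by a single fixed contract. Fix an arbitrary episode $e$ with assigned contract $\mathcal{C}=\langle\varphi_i\rangle_{i=1}^n\in\mathcal{L}_{\mathsf{cert}}$. Because $\mathcal{C}$ is not changed inside the episode, the run of the episode together with the automaton states of $\mathcal{A}_1^{\mathcal{C}},\dots,\mathcal{A}_n^{\mathcal{C}}$ (reset to their initial states at the episode start) forms a contract-product execution $x_0^{\mathcal{C}}a_0x_1^{\mathcal{C}}a_1\cdots$ in the sense of Definition~\ref{def:contract_product}. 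By hypothesis it starts from an initial product state covered by the certification, i.e. one lying in $\WinningRegion_{\mathcal{C}}$ (Definition~\ref{def:certified_contract} and Algorithm~\ref{alg:initial_contract} check every initial state induced by $S_0$). Also $a_t\in\mathsf{Adm}_{\mathcal{C}}(x_t^{\mathcal{C}})$ for all $t$. Theorem~\ref{thm:circular_compositional_soundness} is stated for the initial state $x_0^{\mathcal{C}}$, but its invariant argument uses only membership of the start state in $\WinningRegion_{\mathcal{C}}$. So I will either invoke it verbatim or replay its one-line induction. The induction is: if $x_t^{\mathcal{C}}\in\WinningRegion_{\mathcal{C}}=\mathcal{T}_{\mathcal{C}}(\WinningRegion_{\mathcal{C}})$ and $a_t\in R_{\mathcal{C}}(x_t^{\mathcal{C}})$, then $\mathrm{Post}_{\mathcal{C}}(x_t^{\mathcal{C}},a_t)\subseteq\WinningRegion_{\mathcal{C}}$.

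For the finite-prefix claim, I will use that every state in $\WinningRegion_{\mathcal{C}}$ is locally safe, so $q_{i,t}\notin B_i$ for all $i$ and all $t$. By Definition~\ref{def:contract_safety_automata}, $q_{i,t}=\delta_i^\ast(q_i^0,w\!\upharpoonright_{\Ap_i})$, where $w$ is the global label prefix $\mathcal{L}(s_0)\cdots\mathcal{L}(s_t)$. So no prefix of the episode is a bad prefix of any $\varphi_i$, meaning each projected prefix extends to some infinite word satisfying $\varphi_i$. The step I expect to be the main obstacle is lifting this to ``$w$ is not a bad prefix of $\Phi_{\mathsf{safe}}$''. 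The obstacle is that the extensions witnessing the individual $\varphi_i$ live on different alphabets and may be mutually inconsistent, so entailment $\mathcal{C}\models_{\!\uparrow}\Phi_{\mathsf{safe}}$ does not immediately apply to a single continuation.

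I would first try to avoid that obstacle by using the environment itself as the witness. Every finite execution prefix staying in $\WinningRegion_{\mathcal{C}}$ can be extended forever inside $\WinningRegion_{\mathcal{C}}$. This works by choosing at each step any action in the non-empty rectangle $R_{\mathcal{C}}(x_t^{\mathcal{C}})$ and any successor, since $\Supp(\mathcal{P}(s,a))\neq\emptyset$ and closure keeps us in the region. The resulting infinite global trace $\tau$ extends $w$ and never drives any automaton into $B_i$. Hence $\tau\!\upharpoonright_{\Ap_i}\models\varphi_i$ for every $i$, because a local trace satisfies $\varphi_i$ iff no prefix reaches $B_i$. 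By entailment $\tau\models\Phi_{\mathsf{safe}}$, so $w$ has a satisfying extension and is not a bad prefix.

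Finally, for the infinite-trace statement I apply the same reasoning directly to the (possibly infinite) episode trace. All its prefixes keep the automata out of $B_i$, so each $\varphi_i$ holds on the projection, and $\mathcal{C}\models_{\!\uparrow}\Phi_{\mathsf{safe}}$ yields $\Phi_{\mathsf{safe}}$ for the episode. Since $e$ was arbitrary and nothing crosses episode boundaries, the bandit's choices, which depend arbitrarily on past returns, play no role: the conclusion holds for every episode in the sequence.
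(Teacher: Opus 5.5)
Your proposal is correct and follows essentially the same route as the paper. You fix an episode, note that its contract is certified and that its start state lies in $\WinningRegion_{\mathcal{C}}$, run the invariant of Theorem~\ref{thm:circular_compositional_soundness}, and use that switches happen only at resets. The one difference is in the finite-prefix claim: the paper derives ``no bad prefix'' directly from the episode's trace satisfying $\Phi_{\mathsf{safe}}$, whereas you extend each finite prefix inside $\WinningRegion_{\mathcal{C}}$ (using non-empty certified rectangles and non-empty supports) to get a satisfying continuation, which makes the claim rigorous even for truncated episodes.
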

The proof is available in Appendix~\ref{appsec:certified_selection_safety}. Theorem~\ref{thm:certified_selection_safety} allows the selector to adapt reward estimates and switch contracts between episodes, but it never admits an uncertified tuple. Thus the outer loop can optimise team return over the finite space of certified contracts while preserving deterministic safety. Theorem~\ref{thm:certified_selection_safety} does not by itself give a convergence guarantee for the discounted bandit selector; the optimal safe team return is attainable only insofar as it is represented by some certified contract in the library and the learning dynamics select the corresponding profile. Without that representability assumption, performance is limited by the expressive power of the contract language and certified library.

\section{Empirical Evaluation} \label{sec:empirical_evaluation}
We report the experimental configuration, benchmark suite, certified search counts, and learning curves. Appendix~\ref{app:environments} gives environment details, labelling, and safety objectives; Appendix~\ref{app:extended_experiments} reports full curves. The implementation is open-source and available at \url{https://github.com/nightly/contract-shielding}.

\paragraph*{Implementation Details and Hyperparameters.} The formal development uses a single global alphabet $\Ap$ and global labelling function $\mathcal{L}$. For tractability, the implementation stores per-agent sub-alphabets $\Ap_i\subseteq\Ap$ and compiles each local automaton over $\Sigma_i=2^{\Ap_i}$; automaton~$i$ receives label $\mathcal{L}(s)\cap\Ap_i$ at state $s$. Certification checks consistency with the global model: each $\Ap_i\subseteq\Ap$, each local label is the corresponding projection, and propositions shared by multiple local alphabets agree, see Definition~\ref{def:contract}. We bound candidate formula depth by $D=2$ (interpreting atoms as $D=1$). Other hyperparameters are provided in Appendix~\ref{app:hyperparameters}.

\paragraph*{Algorithms.} We compare independent learners (\algo{IPPO}~\cite{de2020independent}, \algo{IQL}~\cite{tan1993multi}), centralised-critic or value-decomposition learners (\algo{MAPPO}~\cite{yu2022surprising},
\algo{Joint-PPO}, \algo{PQN-VDN}~\cite{sunehag2017value,gallici2025simplifying}), Lagrangian variants of \algo{IPPO} and \algo{IQL}, \algo{ICPO}~\cite{achiam2017constrained}, ordinary shielded variants, and
\algo{Contract-IPPO}/\algo{Contract-IQL}. \algo{MAPPO} uses a centralised critic with decentralised actors; \algo{Joint-PPO} and \algo{Shielded-Joint-PPO} use centralised actors and critics and serve as upper-end reward baselines although outside our decentralised-execution setting.

\paragraph*{Environments.} Evaluation covered six cooperative MARL benchmarks: \env{Flatland}~\cite{mohanty2020flatland}, \env{Connector}~\cite{bonnet2024jumanji}, \env{Level-Based Foraging}~\cite{christianos2020shared}, \env{RWARE}~\cite{papoudakis2020benchmarking}, \env{Pressure Plate}~\cite{ahmed2022deep}, and \env{Car Platoon}~\cite{brorholt2024compositional}. Example~\ref{example:plate} illustrates one safety property and contract; Appendix~\ref{app:environments} expands the rest.

\begin{example}\label{example:plate}
In \env{Pressure Plate}, 2 holder agents open doors by standing on assigned plates while a runner crosses to the goal. Let $Z$ be goal achievement; let $I_k$, $A_k$, and $B_k$ mean that the runner is in door~$k$, after door~$k$, and able to return before door~$k$; and let $H_k$ mean that holder~$k$ has not abandoned plate~$k$ before the goal. The holder obligations let the runner's mask rely on held doors when admitting crossings; factorised masks must be robust to holder departures, so they conservatively keep the runner out of the protected door regions. The default return-route objective and one certified local contract are
\[
\begin{aligned}
\Phi^{\mathsf{pp}}_{\mathsf{safe}}
&\defeq \Always\!\bigwedge_{k=0}^{1}
\bigl((\neg Z\wedge(I_k\vee A_k))\rightarrow B_k\bigr),\\[-0.2ex]
\mathcal{C}^{\mathsf{pp}}
&\defeq \langle \Always H_0,\ \Always H_1,\ \Phi^{\mathsf{pp}}_{\mathsf{safe}}\rangle .
\end{aligned}
\]
\end{example}

\newcommand{\MissingResultGraphic}{%
  \fbox{\parbox[c][0.18\linewidth][c]{0.88\linewidth}{\centering Result pending}}%
}

\newcommand{\ResultGraphic}[1]{%
  \IfFileExists{#1.png}{\includegraphics[width=\linewidth]{#1.png}}{\MissingResultGraphic}%
}

\newcommand{\PaperResultGraphic}[1]{%
  \ResultGraphic{#1}%
}

\newcommand{\ExperimentConciseLegend}{%
  \includegraphics[width=0.84\linewidth]{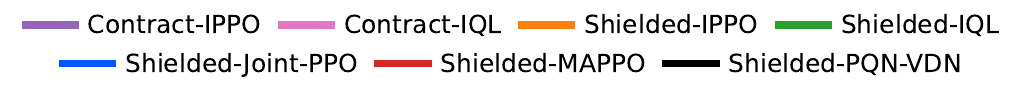}%
  \par\vspace{0.2ex}
}

\newcommand{\ExperimentFullLegend}{%
  \includegraphics[width=0.98\linewidth]{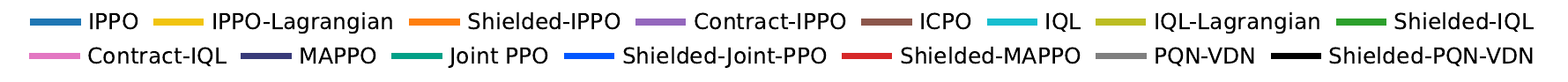}%
  \par\smallskip
}

\newcommand{\RewardPanel}[2]{%
  \begin{minipage}[t]{0.28\linewidth}
    \vspace{0pt}%
    \centering
    \PaperResultGraphic{figures/experiments/#2/shielded/papers/reward}\\[-0.4ex]
    {\scriptsize #1}%
  \end{minipage}%
}

\newcommand{\EnvironmentResultRow}[2]{%
  \par\smallskip
  {\small\bfseries #1}\par\vspace{-0.8ex}
  \makebox[\linewidth][c]{%
    \begin{minipage}[t]{0.5\linewidth}
      \vspace{0pt}%
      \centering
      \PaperResultGraphic{#2/all/papers/reward}\\[-0.6ex]
      {\scriptsize Reward mean}%
    \end{minipage}%
    \begin{minipage}[t]{0.5\linewidth}
      \vspace{0pt}%
      \centering
      \PaperResultGraphic{#2/all/papers/safety}\\[-0.6ex]
      {\scriptsize Cumulative safety violations}%
    \end{minipage}%
  }%
}

\paragraph*{Analysis.}
Figure~\ref{fig:main_reward_curves} shows the shielded and contract-aware reward curves. The plotted quantity is reward: safety is enforced structurally by certified masks and Theorems~\ref{thm:circular_compositional_soundness} and~\ref{thm:certified_selection_safety}; the curves measure retained return under certified actions. Where ordinary factorised shields are deployable, contract shielding improves reward and converges toward \algo{Shielded-Joint-PPO}. For \env{RWARE}, \env{Pressure Plate}, and \env{Connector}, the selected global specifications admit no factorised shield because masks must be conservative over other agents' actions. For \env{Pressure Plate}, shielded baselines use conservative local runner masks rather than a direct factorisation of $\Phi^{\mathsf{pp}}_{\mathsf{safe}}$ (Appendix~\ref{app:env_pressure_plate}). In the largest environment instances, bounded search considered up to 12,001 profiles and certified up to 390 as safe; Table~\ref{tab:contract_synthesis_counts} in Appendix~\ref{app:extended_experiments} gives all objectives and results.

\begin{figure}[t]
  \centering
  \ExperimentConciseLegend

  \begin{tabular*}{\linewidth}{@{\extracolsep{\fill}}ccc@{}}
    \RewardPanel{\env{Pressure Plate}}{pressure_plate} &
    \RewardPanel{\env{RWARE}}{rware} &
    \RewardPanel{\env{Connector}}{connector} \\[-0.2ex]
    \RewardPanel{\env{Flatland}}{flatland} &
    \RewardPanel{\env{Level-Based Foraging}}{level_based_foraging} &
    \RewardPanel{\env{Car Platoon}}{car_platoon}
  \end{tabular*}

  \caption{Team-mean reward curves for all six benchmarks using shielded algorithms. Shaded regions are $\pm$ 95\% confidence intervals across runs.}
  \label{fig:main_reward_curves}
\end{figure}

\section{Conclusion and Future Work} \label{sec:conclusion}
We presented contract-based decentralised shielding for cooperative MARL. Finite profiles of local $\SafeLTL$ obligations are certified by a circular assume-guarantee fixed point, projected into local action masks, and selected
during learning only from pre-certified libraries, separating reward adaptation from deterministic safety. Agents may rely on certified coordination promises while preserving one global safety objective. Across six benchmarks,
contract-aware shields recover coordinated behaviours while retaining pre-emptive safety guarantees. Future work will study quantitative and probabilistic obligations, and heuristics for finding near-reward-optimal contracts without exhaustive enumeration.

\begin{credits}
\subsubsection{\ackname} The research described in this paper was partially supported by the EPSRC (grant number EP/X015823/1) and the UKRI Centre for Doctoral Training in Safe and Trusted AI (grant number EP/S0233356/1).

\subsubsection{\discintname}
 The authors have no competing interests to declare that are
relevant to the content of this article.
\end{credits}

\bibliographystyle{splncs04}
\bibliography{cite}


\appendix

\section*{Appendix}
The appendix is organised as follows:
\begin{itemize}[label={},leftmargin=0pt,itemindent=0pt,itemsep=0pt]
    \item Appendix~\ref{app:safe_ltl_semantics}: $\SafeLTL$ semantics are provided.
    \item Appendix~\ref{appsec:technical_results}: Technical results and proofs are provided.
    \item Appendix~\ref{app:environments}: Environment-level benchmark details are provided.
    \item Appendix~\ref{app:extended_experiments}: Complete experimental results are reported.
    \item Appendix~\ref{app:technical_remarks}: Technical remarks on contract selection are provided.
    \item Appendix~\ref{app:hyperparameters}: Hyperparameters are provided for replication.
\end{itemize}

\section{Safe LTL Semantics} \label{app:safe_ltl_semantics}

Let $\Trace\defeq(2^{\Ap})^\omega$ be the set of traces over $\Ap$. Then, we can define the inductive semantics used in Section~\ref{sec:preliminaries} as follows.
\logicsemanticssetrhswidth{0.58\linewidth}
\begin{logicsemantics}[For any trace $\tau \in \Trace$, position $k \in \mathbb{N}$, and formulae $\phi,\psi$: ]
\logicsemanticssection{Trace semantics}
\logicsemanticsrule{$(\tau,k)\models \true$}{$\iff$}{always}
\logicsemanticsrule{$(\tau,k)\models \false$}{$\iff$}{never}
\logicsemanticsrule{$(\tau,k)\models p$}{$\iff$}{$p\in \tau[k]$ for $p\in\Ap$}
\logicsemanticsrule{$(\tau,k)\models \neg p$}{$\iff$}{$p\notin \tau[k]$ for $p\in\Ap$}
\logicsemanticsrule{$(\tau,k)\models \phi\wedge \psi$}{$\iff$}{$(\tau,k)\models \phi$ and $(\tau,k)\models \psi$}
\logicsemanticsrule{$(\tau,k)\models \phi\vee \psi$}{$\iff$}{$(\tau,k)\models \phi$ or $(\tau,k)\models \psi$}
\logicsemanticsrule{$(\tau,k)\models \Next \phi$}{$\iff$}{$(\tau,k+1)\models \phi$}
\logicsemanticsruleblock{$(\tau,k)\models \phi \WeakUntil \psi$}{$\iff$}{$\big(\forall m\ge k.\ (\tau,m)\models \phi\big)$ or\\ $\big(\exists j\ge k.\ (\tau,j)\models \psi\ \text{and}$\\ $\forall m\in\{k,\dots,j-1\}.\ (\tau,m)\models \phi\big)$}
\logicsemanticsrule{$(\tau,k)\models \Always \phi$}{$\iff$}{$\forall j\ge k.\ (\tau,j)\models \phi$}
\logicsemanticssection{Model satisfaction}
\logicsemanticsrule{$\tau\models \phi$}{$\iff$}{$(\tau,0)\models \phi$}
\logicsemanticsrule{$\mathcal{M}^{\pi}\models \phi$}{$\iff$}{$\Trace_{\mathcal{L}}(\rho)\models \phi$ for every $\rho\in\IPath^\pi(s_0)$}
\end{logicsemantics}

\section{Technical Results} \label{appsec:technical_results}
This section provides all of the formal proofs of results stated in the main text.

\subsection{Proof of Theorem 1. Circular Compositional Soundness} \label{appsec:circular_compositional_soundness}

\circularCompositionalSoundnessTheorem*

\begin{proof}
We prove the invariant $x_t^{\mathcal{C}}\in \WinningRegion_{\mathcal{C}}$ by induction on time (establishing the invariant for all $t\ge 0$). The base case is exactly the second certification condition, $x_0^{\mathcal{C}}\in \WinningRegion_{\mathcal{C}}$. For the step case, assume $x_t^{\mathcal{C}}\in \WinningRegion_{\mathcal{C}}$ and write $x_t^{\mathcal{C}}=(s_t,\bar q_t)$. Since the executed joint action satisfies
$a_t\in\mathsf{Adm}_{\mathcal{C}}(x_t^{\mathcal{C}})$, Definitions~\ref{def:projected_local_shield} and~\ref{def:admitted_joint_actions} give
\[
a_t\in
\prod_{i\in\Ag}R_{\mathcal{C}}^i(x_t^{\mathcal{C}})
=
R_{\mathcal{C}}(x_t^{\mathcal{C}}).
\]
By the choice of the certified rectangle,
$R_{\mathcal{C}}(x_t^{\mathcal{C}})\in\mathsf{Rect}_{\mathcal{C}}(x_t^{\mathcal{C}})
=\mathsf{Rect}_{\mathcal{C}}(\WinningRegion_{\mathcal{C}},x_t^{\mathcal{C}})$.
Thus $R_{\mathcal{C}}(x_t^{\mathcal{C}})$ is a $\WinningRegion_{\mathcal{C}}$-closed local action rectangle at $x_t^{\mathcal{C}}$; in particular, $a_t$ is legal at $s_t$ and
\[
\mathrm{Post}_{\mathcal{C}}(x_t^{\mathcal{C}},a_t)
\subseteq \WinningRegion_{\mathcal{C}}.
\]
Every possible next product state
$x_{t+1}^{\mathcal{C}}$ is in this successor set, hence
$x_{t+1}^{\mathcal{C}}\in \WinningRegion_{\mathcal{C}}$.

Because $\WinningRegion_{\mathcal{C}}=\mathcal{T}_{\mathcal{C}}(\WinningRegion_{\mathcal{C}})$, every state in
$\WinningRegion_{\mathcal{C}}$ is locally safe. For each agent $i$ and time $t$, the automaton component of $x_t^{\mathcal{C}}$ is
\[
q_{i,t}
=
\delta_i^\ast(q_i^0,\ell_i(s_0)\ell_i(s_1)\cdots\ell_i(s_t)),
\]
by the definition of the contract product. Since $q_{i,t}\notin B_i$ for every $t$, no finite prefix of the projected local trace is a bad prefix. By Definition~\ref{def:contract_safety_automata}, the projected local trace satisfies $\varphi_i$ for every agent $i$. The lifted entailment condition $\mathcal{C}\models_{\!\uparrow}\Phi_{\mathsf{safe}}$ in
Definition~\ref{def:certified_contract} then implies that the global trace satisfies $\Phi_{\mathsf{safe}}$.
\end{proof}

\subsection{Proof of Theorem 2. Optimal Safe Recovery} \label{appsec:optimal_safe_recovery}

\optimalSafeRecoveryTheorem*

\begin{proof}
Fix any certified $\mathcal{C}\in\mathcal{L}_{\mathsf{cert}}$ and any
$\pi\in\Pi(\mathcal{C})$. We show that every contract-product execution consistent with
$\pi$ uses only admitted actions. The initial product state is in
$\mathsf{Reach}_{\mathcal{C}}(\pi)$ by definition. If
$x_t^{\mathcal{C}}=(s_t,\bar q_t)\in\mathsf{Reach}_{\mathcal{C}}(\pi)$, compatibility gives
$\Supp(\pi(\cdot\mid s_t))\subseteq\mathsf{Adm}_{\mathcal{C}}(x_t^{\mathcal{C}})$; therefore every
action chosen with positive probability by $\pi$ at $s_t$ is admitted. For any
successor $x_{t+1}^{\mathcal{C}}\in\mathrm{Post}_{\mathcal{C}}(x_t^{\mathcal{C}},a_t)$, closure of
$\mathsf{Reach}_{\mathcal{C}}(\pi)$ gives $x_{t+1}^{\mathcal{C}}\in\mathsf{Reach}_{\mathcal{C}}(\pi)$. By induction,
all actions along every execution consistent with $\pi$ are admitted by the
deployed shields.

Theorem~\ref{thm:circular_compositional_soundness} then implies that every such
execution satisfies $\Phi_{\mathsf{safe}}$, so $\pi\in\Pi_{\mathsf{safe}}$.
Thus every feasible pair $(\mathcal{C},\pi)$ in the certified-shield optimisation has
value at most
$\max_{\pi\in\Pi_{\mathsf{safe}}}J_{s_0}(\pi)=J_{s_0}(\pi^\star)$. Conversely,
by the hypothesis, the feasible pair $(\mathcal{C}^\star,\pi^\star)$ attains
$J_{s_0}(\pi^\star)$. The two inequalities give the claimed equality.
\end{proof}

\subsection{Proof of Theorem 3. Certified Selection Safety} \label{appsec:certified_selection_safety}

\certifiedSelectionSafetyTheorem*

\begin{proof}
Fix an arbitrary episode and let $\mathcal{C}$ be its assigned contract. Since
$\mathcal{C}\in\mathcal{L}_{\mathsf{cert}}$, the definition of the certified library gives
that $\mathcal{C}$ is certified. By hypothesis, the episode begins at an initial contract
product state in $\WinningRegion_{\mathcal{C}}$, and the shield for $\mathcal{C}$ is used for every
step of the episode with generated actions in $\mathsf{Adm}_{\mathcal{C}}(x_t^{\mathcal{C}})$.
Theorem~\ref{thm:circular_compositional_soundness}
therefore applies throughout the episode: all reachable contract product states
remain in $\WinningRegion_{\mathcal{C}}$, no contract safety automaton reaches a
bad state, and the global safety objective is satisfied. Hence no generated
episode prefix is a bad prefix of the global safety objective. Because switches occur only between episodes, the
same argument restarts independently for each selected contract.
\end{proof}

\section{Environments} \label{app:environments}

This section provides details for all of the environments considered in our empirical evaluation of Section~\ref{sec:empirical_evaluation}. We use finite safety abstractions for contract and shield synthesis. For the safety game, the abstraction is required to preserve the successor support relevant to the automata and the truth of all propositions appearing in the global and local obligations.

Figure~\ref{fig:appendix_environments} shows renders of the
six benchmark environments described below.

\begin{figure}[t]
  \centering
  \input{figures/environments_appendix.tex}
  \caption{Renders of the six benchmark environments used in the evaluation.}
  \label{fig:appendix_environments}
\end{figure}

\subsection{\env{Pressure Plate}} \label{app:env_pressure_plate}

\env{Pressure Plate} is a role-structured grid task. Let
$G\defeq\{0,\dots,H-1\}\times\{0,\dots,W-1\}$ be the grid, let $m\defeq n-1$ be the number of doors and plates, and let agent~$n-1$ be the runner. An abstract state is
\[
s\defeq(p_0,\dots,p_{n-1},C,b_0,\dots,b_{m-1},o_0,\dots,o_{m-1},g,V_H,V_W,t),
\]
where $p_i\in G$ is an agent position, $C\subseteq\{0,\dots,m-1\}$ is a
one-step latch recording doors crossed by the runner on the preceding transition, $b_k,o_k,g\in\{0,1\}$ record whether plate~$k$ is pressed, door~$k$ is open, and the goal is reached, $V_H$ records door-holder protocol
violations, $V_W$ records runner wait-for-door violations, and $t$ is the step
count. Each agent has primitive actions
$\{\mathsf{Up},\mathsf{Down},\mathsf{Left},\mathsf{Right},\mathsf{Noop}\}$.
The transition samples a permutation of the agents, applies their proposed moves sequentially, and cancels moves into grid boundaries, walls, closed doors, or occupied cells. After movement, plate~$k$ is pressed iff the assigned holder $k$ occupies it, and door~$k$ is open iff that plate is pressed.
The latch $C'$ contains door~$k$ exactly when the runner entered a cell of door~$k$, left such a cell toward the far side, or crossed the door boundary during the transition; it is empty on non-crossing transitions. The episode terminates when some agent reaches the goal or the horizon is met.

Rewards are role-shaped: holders receive dense shaping toward their assigned
plates, while the runner is shaped toward the goal. The reported experiments
use distance normalisation $D_{\max}=10$, step cost $0.01$, and goal bonus
$10$.

For holder~$i<m$, the local contract alphabet contains its door-holding proposition, door~$i$ open and plate-pressed propositions, holder-on-plate propositions, goal achievement, and the runner's waiting, crossing, in-door, and after-door phase propositions for door~$i$. The runner's local alphabet contains, for every door, its wait-for-door, waiting, crossing, in-door,
after-door, door-open, and return-route propositions, together with goal achievement. The global alphabet contains these propositions and the finite agent-cell, agent-at-goal, agent-before-door, and plate/door status propositions needed to interpret the benchmark labels.

The default benchmark instance uses the linear layout with $H=7$, $W=4$, $n=3$, sensor range~$3$, and horizon~$150$. Let $A_k$, $I_k$, $B_k$, and $Z$ respectively abbreviate the runner-after-door, runner-in-door, runner-can-return
before door, and goal-achieved propositions. The default safety formula is
\[
\Phi^{\mathsf{pp}}_{\mathsf{safe}}
\defeq
\Always\bigwedge_{k\in\{0,1\}}
\bigl((\neg Z\wedge(I_k\vee A_k))\rightarrow B_k\bigr).
\]
The return-route label $B_k$ is computed by reachability from the runner's current cell to a cell before door~$k$, treating walls and closed doors as blocked and ignoring teammate occupancy. A direct generic per-agent shield for this formula is not realisable from the initial state, so the ordinary shielded
baseline uses the conservative runner-only formulas
\[
\begin{aligned}
\varphi_0&\defeq \true,\\
\varphi_1&\defeq \true,\\
\varphi_2&\defeq\Always\neg(I_0\vee A_0\vee I_1\vee A_1),
\end{aligned}
\]
while the global automaton still checks $\Phi^{\mathsf{pp}}_{\mathsf{safe}}$.
Certified contract profiles can instead assign holder obligations that preserve
the runner's return route while the runner carries the global return-route
obligation.

\subsection{\env{RWARE}} \label{app:env_rware}

\env{RWARE} is a warehouse queue and delivery benchmark. A state is
\[
s\defeq(u_1,\dots,u_n,\sigma_1,\dots,\sigma_q,Q,Y,t,\iota),
\]
where $u_i\defeq(x_i,y_i,d_i,\ell_i,c_i,h_i)$ records robot position, direction,
loaded status, whether the carried shelf is requested, and whether a delivery
has occurred; $\sigma_j$ records shelf identities and positions; $Q$ is the
request queue; $Y$ records standing and requested shelf locations; and $t,\iota$
are the total and inactive step counts. Each robot has actions
$\{\mathsf{NOOP},\mathsf{FORWARD},\mathsf{LEFT},\mathsf{RIGHT},
\mathsf{TOGGLE\_LOAD}\}$. Turning updates $d_i$, forward movement requests the
cell in direction $d_i$, and load toggling picks up or drops a shelf when the
warehouse interaction is legal. The transition resolves collisions using
separate robot and shelf occupancy layers; a loaded robot occupies both layers.
When a requested shelf reaches a goal cell, the request is replaced by a new
request according to the environment kernel.

Rewards are sparse and delivery based: in the default individual reward mode,
robot~$i$ receives reward only when it delivers a requested shelf, with no
movement reward or direct penalty for protocol violations.

The local contract alphabet for robot~$i$ contains the
queue-yield proposition \texttt{agent\_i\_queue\_yield\_ok}. The proposition is
false on a transition precisely when robot~$i$ is unloaded, occupies the next
forward cell of a robot carrying a requested shelf in the queue, and remains in
that blocking cell after joint-action execution and collision resolution.
The global alphabet additionally contains finite robot-position propositions,
highway and zone-membership propositions, loaded and carrying-requested status,
requested-shelf locations and at-requested-shelf facts, requested-carrier
status, blocking-requested-carrier facts, and delivery-lane and
requested-carrier-progress protocol propositions.
Writing $q_i\defeq\texttt{agent\_i\_queue\_yield\_ok}$, the default safety formula is
the per-robot queue-yield invariant
\[
\Phi^{\mathsf{rw}}_{\mathsf{safe}}
\defeq
\bigwedge_{i=0}^{n-1}\Always q_i.
\]
The default queue-conflict instance uses $n=2$, shelf columns~$3$, column
height~$8$, one shelf row, sensor range~$1$, flattened observations, request
queue size~$2$, individual rewards, no inactivity cutoff, and horizon~$200$.

Ordinary factorised shielding is not realisable for this default queue-yield
objective. The proposition $q_i$ records a joint transition fact. A factorised
shield exposes independent local action sets, so every combination of
individually permitted actions must remain safe against arbitrary teammate
choices. In the queue-conflict layout, reachable states can require the blocker
and carrier to take compatible same-step actions; no single local blocker action
is certified safe against all carrier actions, and the projected local mask can
become empty. Contract shields avoid this failure by first certifying teammate
obligations, so local masks are checked against obligation-constrained teammates
rather than the full adversarial action set.

\subsection{\env{Connector}} \label{app:env_connector}

\env{Connector} is a grid routing task. A state is
\[
s\defeq(B,p_0,\dots,p_{n-1},z_0,\dots,z_{n-1},V,t),
\]
where $B\in\{0,\dots,3n\}^{G}$ is the grid of empty cells, trails, current
positions, and targets; $p_i,z_i\in G$ are current and target cells; $V$ records
reservation violations; and $t$ is the step count. Each agent has actions
$\{\mathsf{Noop},\mathsf{Up},\mathsf{Right},\mathsf{Down},\mathsf{Left}\}$.
The action mask permits a movement only into an empty cell or the agent's own
target; illegal actions have no movement effect. Simultaneous proposed moves
are computed from the current positions. Destination conflicts are resolved by a
fixed agent-index priority rule: an agent whose proposed destination is
also proposed by a later-indexed agent is cancelled, while successful moves
leave a path marker behind, and reaching the target changes the target marker
into the agent position marker. The episode terminates when all agents are
connected or blocked, or when the horizon is met.

Dense rewards give each unconnected agent a connection bonus and a per-step
cost until it reaches its target. The default constants are
$r_{\mathsf{conn}}=1$ and $r_{\mathsf{step}}=-0.03$.

The local contract alphabet for agent~$i$ contains its aggregate
reservation-respect and route-clearance propositions, pairwise propositions
stating that agent~$i$ keeps another agent's reserved route clear and respects
that agent's reservation, and every agent's reserved-route-clear proposition.
The global alphabet contains finite current-position and target-position
propositions, action-availability propositions, connection and blocked status,
reserved-route-blocked and on-reserved-route propositions, and the aggregate and
pairwise reservation-respect and route-clearance propositions. Let
$r_i\defeq\texttt{agent\_i\_reserved\_route\_clear\_ok}$. The default safety formula is
\[
\Phi^{\mathsf{co}}_{\mathsf{safe}}
\defeq
\bigwedge_{i=0}^{n-1}\Always r_i.
\]
The default reservation-conflict experiment uses the
\texttt{reservation\_\allowbreak priority\_\allowbreak conflict} generator with fixed layout seed~$0$,
a $5\times5$ grid, $n=2$, dense rewards, the reserved cell $(2,2)$ for
agent~$0$, and horizon~$12$.

Ordinary factorised shielding is not realisable for this default
reserved-route-clearance objective. The proposition $r_i$ is owned by
agent~$i$ but can be falsified by a teammate entering or remaining on
agent~$i$'s reserved route. A local shield for the route owner must therefore
be safe against all teammate actions, yet the owner has no action that can force
the teammate off its reserved cells. Conversely, the teammate's factorised mask
cannot express the conditional commitment ``avoid the owner's reserved route
while the owner is still unconnected'' unless that commitment appears as part
of a certified contract profile. The Cartesian product of ordinary local masks
therefore cannot deploy a non-empty shield that preserves the global
route-clearance invariant from the reservation-priority layout, whereas a
contract profile can add explicit reservation-respect obligations.

\subsection{\env{Flatland}} \label{app:env_flatland}

\env{Flatland} is a railway-routing benchmark. A state is
$
s\defeq(\xi_0,\dots,\xi_{n-1},D,C,V,t),
$
where each train state
$\xi_i\defeq(p_i,d_i,\kappa_i,z_i,p_i^0,d_i^0)$ contains current position, direction,
\env{Flatland} train status, target, and initial placement; $D$ is the set of
deadlocked trains; $C$ is the set of trains stopped by conflicts; $V$ records
pairwise yield violations; and $t$ is the elapsed step count. The primitive
actions are the five \env{Flatland} rail actions: do nothing, turn left, move
forward, turn right, and stop. The transition is \env{Flatland}'s
\texttt{RailEnv} transition restricted by the rail graph, train status, malfunctions, and conflict resolution. In the single-track benchmark this transition is deterministic after the fixed scenario is generated.

Progress rewards give credit for reducing shortest-path distance to the target,
with a per-step cost, arrival bonus, and deadlock penalty. The default
constants are $c_{\mathsf{prog}}=1$, $c_{\mathsf{step}}=0.01$,
$c_{\mathsf{arr}}=5$, and $c_{\mathsf{dead}}=5$.

The local contract alphabet for train~$i$ contains all trains' deadlock
propositions and the pairwise yield propositions
\texttt{agent\_i\_yields\_to\_agent\_j\_ok} for $j\neq i$. The global alphabet
contains finite train-position propositions, target and off-map status,
\env{Flatland} train-status propositions, deadlock and conflict-stop
propositions, and pairwise yield-protocol propositions. The default safety
formula is
\[
\Phi^{\mathsf{fl}}_{\mathsf{safe}}
\defeq
\bigwedge_{i=0}^{n-1}\Always\neg(\texttt{agent\_i\_deadlocked}).
\]
The default instance is \texttt{single\_track\_meet} with $n=2$, grid
$7\times3$, unit-speed trains, tree depth~$1$, predictor depth~$10$,
progress rewards, \texttt{team\_done} termination, no malfunctions, and the
horizon is $12$. Candidate contracts can add pairwise yield obligations while preserving the global no-deadlock objective.

\subsection{\env{Level-Based Foraging}} \label{app:env_lbf}

\env{Level-Based Foraging} models cooperative loading. A state is
\[
s\defeq(p_0,\ell_0,\dots,p_{n-1},\ell_{n-1},F,V_F,V_S,V_L,V_C,t),
\]
where $p_i\in G$ and $\ell_i$ are agent positions and levels; $F$ is the finite
set of food triples $(x,y,\ell)$; $V_F,V_S,V_L,V_C$ record failed loads,
successful loads, load attempts, and cooperative-load protocol violations; and
$t$ is the step count. Each agent has actions
$\{\mathsf{NONE},\mathsf{NORTH},\mathsf{SOUTH},\mathsf{WEST},\mathsf{EAST},
\mathsf{LOAD}\}$. Invalid state-dependent actions become
$\mathsf{NONE}$, movement conflicts cancel all colliding moves, and a
$\mathsf{LOAD}$ action affects a food item only when the loader is adjacent to
that food. If the total level of adjacent loaders is below the food level, the
load fails; otherwise the food is collected.

Rewards are assigned only by loading: failed load attempts are penalised, and
successful loads give level-weighted credit to participating agents, normalised
by the total spawned food level in the default configuration.

The local contract alphabet for agent~$i$ contains
\texttt{agent\_i\_coop\_load\_ok} and every agent's failed-load proposition.
The global alphabet contains food-availability, all-food-collected,
joint-load-ready, food-position, and food-level propositions, together with
finite agent-position propositions and per-agent load-attempt, successful-load,
failed-load, cooperative-load, adjacent-food, needs-partner,
partner-adjacent-same-food, and can-load-solo propositions. The default safety
formula is
\[
\Phi^{\mathsf{lbf}}_{\mathsf{safe}}
\defeq
\bigwedge_{i=0}^{n-1}\Always\neg(\texttt{agent\_i\_failed\_load}).
\]
The default experiment uses a $5\times5$ field, $n=2$, one level-$2$ food item,
two level-$1$ agents, \texttt{force\_coop=true}, normalised rewards,
failed-load penalty~$1$, sight radius~$5$, fixed layout seed~$0$, vector
observations, and horizon~$25$.
Candidate contracts can add cooperative-load obligations using
\texttt{agent\_i\_coop\_load\_ok} while preserving the global no-failed-load
objective.

\subsection{\env{Car Platoon}} \label{app:env_car_platoon}

\env{Car Platoon} has one uncontrolled lead car and $n=N-1$ controlled followers, and is based on the environment described in~\cite{brorholt2024compositional}. Cars are ordered front to back. A state is
\[
s\defeq(v_0,\dots,v_{N-1},d_0,\dots,d_{n-1},\Delta_0,\dots,\Delta_{N-1},
V_C,V_S,t),
\]
where $v_k$ is car velocity, $d_i$ is the following gap between car~$i$ and
car~$i+1$, $\Delta_k$ records damage, $V_C,V_S$ record conservative-follow and
smooth-lead protocol violations, and $t$ is the step count. Controlled agents
choose brake, coast, or accelerate, mapped to accelerations $-2,0,2$, while the
lead car follows a velocity-dependent stochastic policy. Velocities are clipped
to $[v_{\min},v_{\max}]$, and each gap is updated by trapezoidal integration of
relative velocity:
\[
d_i' = d_i + \frac{(v_i-v_{i+1})+(v_i'-v_{i+1}')}{2}\Delta t .
\]
Contact damages the adjacent cars, and a gap is unsafe when
$d_i'\le d_{\min}$ or $d_i'\ge d_{\max}$. Rewards penalise large following
gaps and unsafe-gap violations.
The default experiment uses $N=3$, $\Delta t=1$, velocity bounds
$[v_{\min},v_{\max}]=[0,2]$, $d_{\min}=0$, $d_{\max}=20$, initial gap~$10$,
safety-violation penalty~$100$, and termination on violation.

The local contract alphabet for controlled follower~$i$ contains its gap-safe,
conservative-follow, and smooth-lead protocol propositions. The global alphabet
contains discretised velocity, front-velocity, and following-distance
propositions for each controlled follower, together with gap-safe, crashed,
too-far, near-boundary, closing/opening-fast
safe-to-accelerate-if-front-coasts, damage, and protocol propositions. Writing
$g_i\defeq\texttt{agent\_i\_gap\_safe}$, the default safety formula is
\[
\Phi^{\mathsf{cp}}_{\mathsf{safe}}
\defeq
\bigwedge_{i=0}^{n-1}\Always g_i.
\]
Candidate contracts can add conservative-follow or smooth-lead obligations
while keeping this global gap-safety objective fixed.

\section{Extended Experimental Results} \label{app:extended_experiments}

This appendix reports the extended empirical material omitted from the main paper. For each benchmark, the extended results record complete learning curves, safety-violation counts, shield-intervention statistics, and certified-contract selection traces when multiple certified profiles are available. Experiments were run on a Linux workstation with an AMD EPYC 7702P 64-core CPU and 64 logical cores and 203.48 GiB system RAM, and a single NVIDIA A16 GPU with 14.6 GiB memory.
Table~\ref{tab:contract_synthesis_counts} reports the global safety objectives, bounded-search profile counts, and additional-obligation profile counts used in the main
evaluation.

\begin{table}[t]
  \centering
  \scriptsize
  \setlength{\tabcolsep}{2pt}%
  \renewcommand{\arraystretch}{1.15}%
  \rowcolors{2}{black!3}{white}
  \begin{adjustbox}{max width=\linewidth}
  \begin{tabular}{@{} L{0.17\linewidth} L{0.63\linewidth} r r r @{}}
    \toprule
    Environment & \makecell{Safety \SafeLTL{}\\formula} & \makecell{Profiles\\considered} & \makecell{Profiles\\certified safe} & \makecell{Additional-obligation\\profiles} \\
    \midrule
    \env{Car Platoon} &
      $\displaystyle\bigwedge_{i\in\mathsf{all\_agents}}\Always\,\mathsf{agent}_i\_\mathsf{gap\_safe}$ & 12{,}001 & 22 & 1 \\
    \env{Connector} &
      $\displaystyle\bigwedge_{i\in\mathsf{all\_agents}}\Always\,\mathsf{agent}_i\_\mathsf{reserved\_route\_clear\_ok}$ & 4{,}097 & 81 & 1 \\
    \env{Flatland} &
      $\displaystyle\bigwedge_{i\in\mathsf{all\_agents}}\Always\neg\mathsf{agent}_i\_\mathsf{deadlocked}$ & 4{,}097 & 337 & 2 \\
    \env{Level-Based Foraging} &
      $\displaystyle\bigwedge_{i\in\mathsf{all\_agents}}\Always\neg\mathsf{agent}_i\_\mathsf{failed\_load}$ & 4{,}097 & 390 & 1 \\
    \env{Pressure Plate} &
      $\begin{aligned}[t]
        \Always\!\bigwedge_{k\in\{0,1\}}
        \bigl((\neg Z\wedge(I_k\vee A_k))\rightarrow B_k\bigr)
      \end{aligned}$ & 2 & 1 & 1 \\
    \env{RWARE} &
      $\displaystyle\bigwedge_{i\in\mathsf{all\_agents}}\Always\,\mathsf{agent}_i\_\mathsf{queue\_yield\_ok}$ & 1{,}369 & 16 & 0 \\
    \bottomrule
  \end{tabular}
  \end{adjustbox}
  \rowcolors{2}{}{}
  \caption{Safety objectives are shown as global \SafeLTL{} formulae alongside exported bounded-search profile counts. The final column counts retained profiles whose obligations are stricter than the global safety formula. For \env{Pressure Plate}, $Z$, $I_k$, $A_k$, and $B_k$ abbreviate goal achievement, runner in-door, runner after-door, and runner return-route propositions.
  }
  \label{tab:contract_synthesis_counts}
\end{table}

Tables~\ref{tab:training_timings_policy_gradient},
\ref{tab:training_timings_value_based}, and
\ref{tab:training_timings_shielded_contract} report the training timings.

\begin{table}[!htbp]
  \centering
  \scriptsize
  \setlength{\tabcolsep}{3pt}%
  \renewcommand{\arraystretch}{1.05}%
  \rowcolors{2}{black!3}{white}
  \begin{adjustbox}{max width=\linewidth}
  \begin{tabular}{@{} l c c c c c @{}}
    \toprule
    Environment & \algo{IPPO} & \algo{IPPO-Lagrangian} & \algo{ICPO} & \algo{MAPPO} & \algo{Joint-PPO} \\
    \midrule
    \env{Pressure Plate} & $0.81$ & $0.84$ & $0.85$ & $0.85$ & $0.88$ \\
    \env{RWARE} & $0.90$ & $0.92$ & $0.95$ & $0.88$ & $0.89$ \\
    \env{Connector} & $0.81$ & $0.84$ & $0.93$ & $0.92$ & $0.95$ \\
    \env{Flatland} & $1.04$ & $1.04$ & $1.09$ & $1.12$ & $1.14$ \\
    \env{Level-Based Foraging} & $0.73$ & $0.73$ & $0.78$ & $0.86$ & $0.82$ \\
    \env{Car Platoon} & $0.72$ & $0.73$ & $0.79$ & $0.78$ & $0.77$ \\
    \bottomrule
  \end{tabular}
  \end{adjustbox}
  \rowcolors{2}{}{}
  \caption{Policy-gradient baseline training timings. Entries are mean wall-clock hours over 3 runs.}
  \label{tab:training_timings_policy_gradient}
\end{table}

\begin{table}[!htbp]
  \centering
  \scriptsize
  \setlength{\tabcolsep}{3pt}%
  \renewcommand{\arraystretch}{1.05}%
  \rowcolors{2}{black!3}{white}
  \begin{adjustbox}{max width=\linewidth}
  \begin{tabular}{@{} l c c c @{}}
    \toprule
    Environment & \algo{IQL} & \algo{IQL-Lagrangian} & \algo{PQN-VDN} \\
    \midrule
    \env{Pressure Plate} & $0.53$ & $0.55$ & $0.67$ \\
    \env{RWARE} & $0.57$ & $0.57$ & $0.62$ \\
    \env{Connector} & $0.59$ & $0.60$ & $0.69$ \\
    \env{Flatland} & $0.82$ & $0.83$ & $0.90$ \\
    \env{Level-Based Foraging} & $0.47$ & $0.46$ & $0.60$ \\
    \env{Car Platoon} & $0.48$ & $0.48$ & $0.63$ \\
    \bottomrule
  \end{tabular}
  \end{adjustbox}
  \rowcolors{2}{}{}
  \caption{Value-based training timings. Entries are mean wall-clock hours over 3 runs.}
  \label{tab:training_timings_value_based}
\end{table}

\begin{table}[!htbp]
  \centering
  \scriptsize
  \setlength{\tabcolsep}{3pt}%
  \renewcommand{\arraystretch}{1.05}%
  \rowcolors{2}{black!3}{white}
  \begin{adjustbox}{max width=\linewidth}
  \begin{tabular}{@{} l c c c c c c c @{}}
    \toprule
    Environment & \makecell{\algo{Shielded}\\\algo{IPPO}} & \makecell{\algo{Contract}\\\algo{IPPO}} & \makecell{\algo{Shielded}\\\algo{IQL}} & \makecell{\algo{Contract}\\\algo{IQL}} & \makecell{\algo{Shielded}\\\algo{MAPPO}} & \makecell{\algo{Shielded}\\\algo{Joint-PPO}} & \makecell{\algo{Shielded}\\\algo{PQN-VDN}} \\
    \midrule
    \env{Pressure Plate} & $0.92$ & $0.85$ & $0.65$ & $0.57$ & $0.94$ & $1.10$ & $0.77$ \\
    \env{RWARE} & \textsc{n/a} & $0.96$ & \textsc{n/a} & $0.64$ & \textsc{n/a} & $1.23$ & \textsc{n/a} \\
    \env{Connector} & \textsc{n/a} & $0.88$ & \textsc{n/a} & $0.62$ & \textsc{n/a} & $1.07$ & \textsc{n/a} \\
    \env{Flatland} & $1.06$  & $1.04$ & $0.88$  & $0.85$  & $1.17$ & $1.23$ & $0.95$ \\
    \env{Level-Based Foraging} & $0.81$ & $0.80$ & $0.55$ & $0.54$ & $0.89$ & $0.92$ & $0.67$ \\
    \env{Car Platoon} & $0.77$ & $0.73$ & $0.52$ & $0.50$ & $0.83$ & $0.82$ & $0.68$ \\
    \bottomrule
  \end{tabular}
  \end{adjustbox}
  \rowcolors{2}{}{}
  \caption{Shielded and contract-aware training timings. Entries are mean wall-clock hours over 3 runs; \textsc{n/a} marks intentionally omitted baselines.}
  \label{tab:training_timings_shielded_contract}
\end{table}

\newcommand{\EnvironmentResultFigurePair}[6]{%
  \begin{figure}[!htbp]
    \centering
    \ExperimentFullLegend

    \EnvironmentResultRow{#1}{#2}
    \vspace{0.6ex}
    \EnvironmentResultRow{#4}{#5}

    \caption{Learning curves for #1 and #4. Shaded regions are 95\% confidence
    intervals across runs.}
    \label{fig:appendix_experiment_#3}
    \label{fig:appendix_experiment_#6}
  \end{figure}
}

\newcommand{\EnvironmentResultFigureSingle}[3]{%
  \begin{figure}[!htbp]
    \centering
    \ExperimentFullLegend

    \EnvironmentResultRow{#1}{#2}

    \caption{Learning curves for #1. Shaded regions are 95\% confidence
    intervals across runs.}
    \label{fig:appendix_experiment_#3}
  \end{figure}
}

\EnvironmentResultFigurePair{\env{Pressure Plate}}{figures/experiments/pressure_plate}{pressure_plate}{\env{RWARE}}{figures/experiments/rware}{rware}
\EnvironmentResultFigurePair{\env{Connector}}{figures/experiments/connector}{connector}{\env{Flatland}}{figures/experiments/flatland}{flatland}
\EnvironmentResultFigurePair{\env{Level-Based Foraging}}{figures/experiments/level_based_foraging}{level_based_foraging}{\env{Car Platoon}}{figures/experiments/car_platoon}{car_platoon}

The extended plots are reported in
Figures~\ref{fig:appendix_experiment_pressure_plate},
\ref{fig:appendix_experiment_rware},
\ref{fig:appendix_experiment_connector},
\ref{fig:appendix_experiment_flatland},
\ref{fig:appendix_experiment_level_based_foraging}, and
\ref{fig:appendix_experiment_car_platoon}.

\clearpage

\section{Technical Remarks} \label{app:technical_remarks}

\paragraph{Policies and shielded controllers.} The set
$\Pi_{\mathsf{safe}}$ in Section~\ref{sec:problem_statement} records safety over environment traces. Once a contract is fixed, the current controller is naturally Markov on the contract product $x^{\mathcal{C}}=(s,\bar q)$. However, viewed on the original environment, $\bar q$ is finite memory updated from the label
stream. Definition~\ref{def:local_mask_identifiability} states when each agent can compute its own product-indexed mask without runtime communication. Thus safety claims for shielded policies mean that the product controller satisfies
$\Phi_{\mathsf{safe}}$, equivalently that its finite-memory projection lies in $\Pi_{\mathsf{safe}}$.

\begin{algorithm}[t]
\DontPrintSemicolon
\caption{Discounted-UCB contract selection}
\label{alg:contract-update}
\KwIn{initial certified contract $\mathcal{C}_0$, certified library $\mathcal{L}_{\mathsf{cert}}$, warm-up $H_0$, dwell time $H$, discount $\eta$, exploration coefficient $\beta$}
deploy $\mathcal{C}_0$ for the first $H_0$ completed episodes\;
initialise discounted statistics $N_{\mathcal{C}},S_{\mathcal{C}} \gets 0$ for every $\mathcal{C}\in\mathcal{L}_{\mathsf{cert}}$\;
set $\mathcal{C} \gets \mathcal{C}_0$ and visited set $V\gets\emptyset$\;
\While{training continues}{
  deploy $\mathcal{C}$ for one dwell block of $H$ completed episodes\;
  observe per-agent episode returns and set $y\gets (Hn)^{-1}\sum_{e=1}^{H}\sum_{i=1}^n R_e^i$\;
  \ForEach{$\mathcal{C}'\in\mathcal{L}_{\mathsf{cert}}$}{
    update $N_{\mathcal{C}'} \gets \eta N_{\mathcal{C}'}$ and $S_{\mathcal{C}'} \gets \eta S_{\mathcal{C}'}$\;
  }
  update $N_{\mathcal{C}} \gets N_{\mathcal{C}}+1$ and $S_{\mathcal{C}} \gets S_{\mathcal{C}}+y$\;
  update $V\gets V\cup\{\mathcal{C}\}$\;
  \eIf{some $\mathcal{C}'\in\mathcal{L}_{\mathsf{cert}}$ has $\mathcal{C}'\notin V$}{
    set $\mathcal{C}$ to the first unvisited certified contract in library order\;
  }{
    set $\mathcal{C} \gets \arg\max_{\mathcal{C}'\in\mathcal{L}_{\mathsf{cert}}}
    \left(S_{\mathcal{C}'}/N_{\mathcal{C}'}+\beta\sqrt{\log(1+\max\{1,\sum_{\tilde{\mathcal{C}}}N_{\tilde{\mathcal{C}}}\})/N_{\mathcal{C}'}}\right)$\;
  }
}
\end{algorithm}

\paragraph{Algorithms.} The learner suite includes independent policy-gradient and value-based baselines (\algo{IPPO}~\cite{de2020independent}, \algo{IQL}~\cite{tan1993multi}), centralised-critic cooperative baselines (\algo{MAPPO}~\cite{yu2022surprising}), constrained baselines, ordinary shields, and contract-aware variants. \algo{PQN-VDN} is a Parallelised Q-Network learner whose selected per-agent Q-values are summed through a VDN team value~\cite{gallici2025simplifying,sunehag2017value}; \algo{ICPO} is the independent \algo{CPO} baseline, using cost critics, conjugate-gradient trust-region steps, and backtracking line search~\cite{achiam2017constrained}.

\paragraph{Synchronisation for contract selection.} During learning, contract selection requires infrequent synchronisation among agents: they agree on the active certified contract and may exchange scalar performance summaries (to aggregate team reward where rewards are not homogeneous), used to test alternatives. This communication occurs only at episode or dwell-block boundaries and is infrequent relative to environment interaction. In particular, dwell block boundaries can be made extremely large, although this slows adaptation toward the best-performing certified contract. It is not part of the runtime safety argument, which is carried by the deployed local certificates; at execution time no such communication is required once local mask identifiability holds.

\paragraph{Training overhead.} The timing tables in Appendix~\ref{app:extended_experiments} show no material overhead from contract selection relative to factorised shielded training, excluding time required to generate the initial library before training begins. Selection is evaluated only at dwell-block boundaries over a finite pre-certified library, so its cost is small compared with environment interaction and policy updates.

\paragraph{Decentralised common reward payoff.} Although our evaluation uses decentralised-training-decentralised-execution algorithms, cooperative MARL optimises team reward. In our experimental configuration, individual rewards are generally aligned with team success, so independent optimisation can improve team return indirectly, even though rewards are neither a homogeneous common payoff nor broadcast to all agents. However, the bandit selector uses team return. All agents share the same task-level preference ordering, so the sum of individual returns is the natural cooperative objective rather than a centralised source of additional safety information.

\section{Hyperparameters} \label{app:hyperparameters}

This appendix records the hyperparameters used for experimental replication. All runs used the same environment configuration, policy optimiser settings, shield-construction parameters, and
contract-selection parameters reported here.
Tables~\ref{tab:hyperparameters_run_budget}, \ref{tab:hyperparameters_ppo},
\ref{tab:hyperparameters_value_based}, and
\ref{tab:hyperparameters_contracts} collect the concrete settings.

\paragraph*{Run budget and benchmark discounts.}
Each experiment setting is run three times with seeds $0,1,2$ and a budget of $10^6$ environment steps per run. All runs use one parallel environment, rollouts of length $128$, four update epochs, and four minibatches per update; for PPO-family methods this gives minibatches of size $32$. The benchmark-level discount factor and episode cap are:

\begin{table}[H]
\centering
\scriptsize
\setlength{\tabcolsep}{3pt}%
\renewcommand{\arraystretch}{1.05}%
\rowcolors{2}{black!3}{white}
\begin{adjustbox}{max width=\linewidth}
\begin{tabular}{@{} l l c c @{}}
\toprule
Benchmark & Safety formula & $\gamma$ & Episode cap \\
\midrule
\env{Pressure Plate} & \texttt{runner\_requires\_return\_routes} & $0.99$ & $150$ \\
\env{RWARE} & \texttt{queue\_yield\_protocol} & $0.995$ & $200$ \\
\env{Car Platoon} & \texttt{maintain\_safe\_gap} & $0.99$ & $60$ \\
\env{Connector} & \texttt{preserve\_reserved\_route\_clearance} & $0.99$ & $12$ \\
\env{Flatland} & \texttt{avoid\_deadlocks} & $0.99$ & $12$ \\
\env{Level-Based Foraging} & \texttt{avoid\_failed\_loads} & $0.99$ & $25$ \\
\bottomrule
\end{tabular}
\end{adjustbox}
\rowcolors{2}{}{}
\caption{Benchmark run configuration used by the experiments.}
\label{tab:hyperparameters_run_budget}
\end{table}

\paragraph*{Policy-gradient settings.}
The PPO-family actor-critic settings in Table~\ref{tab:hyperparameters_ppo}
apply to \algo{IPPO}, \algo{MAPPO}, \algo{Joint-PPO}, their shielded variants,
\algo{Contract-IPPO}, and \algo{IPPO-Lagrangian}. The optimiser is Adam with
$\epsilon=10^{-5}$ and global gradient-norm clipping at $0.5$; the learning
rate is linearly annealed from $2.5\cdot 10^{-4}$ to zero over training.
\algo{ICPO} is specified separately under the constrained settings below.

\begin{table}[H]
\centering
\scriptsize
\setlength{\tabcolsep}{3pt}%
\renewcommand{\arraystretch}{1.05}%
\rowcolors{2}{black!3}{white}
\begin{adjustbox}{max width=\linewidth}
\begin{tabular}{@{} l l @{}}
\toprule
Hyperparameter & Value \\
\midrule
GAE parameter & $\lambda=0.95$ \\
PPO clip parameter & $0.2$ \\
Value clip parameter & $0.2$ \\
Entropy coefficient & $0.01$ \\
Value-loss coefficient & $0.5$ \\
Activation & \texttt{tanh} \\
Actor/value networks & two hidden layers of width $64$ with orthogonal initialisation \\
\algo{MAPPO}/\algo{Joint-PPO} centralised modules & centralised critic for \algo{MAPPO} and centralised actor--critic for \algo{Joint-PPO}, width $64$ \\
\bottomrule
\end{tabular}
\end{adjustbox}
\rowcolors{2}{}{}
\caption{Policy-gradient hyperparameters shared by the PPO-family methods.}
\label{tab:hyperparameters_ppo}
\end{table}

\paragraph*{Value-based settings.}
\algo{IQL}, \algo{Shielded-IQL}, \algo{Contract-IQL}, and
\algo{IQL-Lagrangian} use independent Q-networks with Adam
($\epsilon=10^{-5}$), learning rate $2.5\cdot 10^{-4}$, no learning-rate
annealing, and global gradient-norm clipping at $0.5$. \algo{PQN-VDN} uses the
same optimiser settings and the benchmark discount factor from
Table~\ref{tab:hyperparameters_run_budget}.

\begin{table}[H]
\centering
\scriptsize
\setlength{\tabcolsep}{3pt}%
\renewcommand{\arraystretch}{1.05}%
\rowcolors{2}{black!3}{white}
\begin{adjustbox}{max width=\linewidth}
\begin{tabular}{@{} l l l @{}}
\toprule
Hyperparameter & \algo{IQL} family & \algo{PQN-VDN} \\
\midrule
Network & two hidden layers, width $64$, \texttt{tanh} & two hidden layers, width $256$, \texttt{relu} \\
Normalisation & none & layer normalisation \\
Update epochs & $4$ & $4$ \\
Replay/minibatch size & $256$ & $32$ \\
Learning starts & $5000$ steps & immediate \\
Target update & every $16$ updates, $\tau=1.0$ & online TD target \\
Exploration schedule & $\epsilon:1.0\to0.05$ over $80\%$ of updates & $\epsilon:1.0\to0.1$ over $10\%$ of updates \\
Other settings & double Q-learning enabled; replay size $50{,}000$ & agent id appended; VDN team reward is summed; $\lambda=0.3$; dueling disabled \\
\bottomrule
\end{tabular}
\end{adjustbox}
\rowcolors{2}{}{}
\caption{Value-based learner hyperparameters.}
\label{tab:hyperparameters_value_based}
\end{table}

\paragraph*{Constrained settings.}
The constrained baselines use safety violations as cost signal, cost limit $0$, and cost discount $\gamma_c=1.0$. Both
Lagrangian methods initialise the multiplier at $0$ and update it with step
size $0.05$. \algo{IPPO-Lagrangian} clips the multiplier to $[0,10^6]$, uses
cost GAE parameter $\lambda_c=0.95$, cost-value coefficient $0.5$, and does
not normalise cost advantages. \algo{IQL-Lagrangian} projects the multiplier
to be non-negative and learns independent reward and cost Q-networks with the
IQL-family settings above.
\algo{ICPO} uses independent actor, reward-critic, and cost-critic networks
with hidden widths $(64,32)$, cost GAE parameter $\lambda_c=0.5$, cost-value
coefficient $0.5$, target KL $0.01$, $10$ conjugate-gradient iterations,
damping $0.1$, one Fisher-vector-product sample frequency, and a backtracking
line search with $15$ steps and coefficient $0.8$. Its cost signal is further
augmented by a learned future-violation predictor: a one-hidden-layer
network of width $32$ is trained for $25$ steps per update to predict whether
a raw safety cost occurs within the next $20$ rollout steps, and the predicted
failure probability is added to the raw cost with coefficient $1.0$.

\paragraph*{Shield and contract settings.}
We use discounted-UCB contract selection with discount $0.95$, and contract changes only at episode boundaries. Ordinary shielded baselines use the global safety formula for each agent where viable (i.e. apart from \env{RWARE}, \env{Connector}, and \env{Pressure Plate}, as previously described in Section~\ref{sec:empirical_evaluation}).

\begin{table}[H]
\centering
\scriptsize
\setlength{\tabcolsep}{3pt}%
\renewcommand{\arraystretch}{1.05}%
\rowcolors{2}{black!3}{white}
\begin{adjustbox}{max width=\linewidth}
\begin{tabular}{@{} l r r r r c c c c @{}}
\toprule
Benchmark & \makecell{Max\\cand.} & \makecell{Max\\profiles} & \makecell{Max active\\per agent} & \makecell{Atomic\\prop. cap} & \makecell{Weak\\until} & \makecell{Warmup/\\dwell} & \makecell{UCB\\coef.}  \\
\midrule
\env{Pressure Plate} & $128$ & $1$ & $2$ & $128$ & no & $10/10$ & $1.0$ \\
\env{RWARE} & $1024$ & $12000$ & $2$ & $128$ & no & $10/10$ & $1.0$ \\
\env{Car Platoon} & $1024$ & $12000$ & $2$ & $64$ & no & $10/10$ & $1.0$ \\
\env{Connector} & $1024$ & $4096$ & $2$ & $128$ & no & $0/5$ & $1.0$ \\
\env{Flatland} & $1024$ & $4096$ & $2$ & $96$ & no & $0/5$ & $1.0$ \\
\env{Level-Based Foraging} & $1024$ & $4096$ & $2$ & $128$ & no & $0/5$ & $0.0$ \\
\bottomrule
\end{tabular}
\end{adjustbox}
\rowcolors{2}{}{}
\caption{Contract synthesis and selection hyperparameters. ``Warmup/dwell'' is measured in completed episodes.}
\label{tab:hyperparameters_contracts}
\end{table}

Notice that, in Table~\ref{tab:hyperparameters_contracts}, the UCB coefficient $\beta$ controls the optimism bonus: when $\beta=1$, contracts with fewer recent samples receive a positive exploration bonus. \env{Level Based Foraging} has a contract library that is small and contains profiles that are safe but under-performant in terms of reward. Since the task is sparse reward, setting $\beta=0$ avoids repeated exploration of low-reward profiles.

\paragraph*{Reported curves.}
Reward and safety curves are constructed from completed episode histories.  We report the reporting the mean with $95\%$ confidence interval bands.
Smoothing is applied for visualisation purposes, the plots interpolate each curve to $800$ points and apply a centred moving-average window of width $11$.

\end{document}